\documentclass[letterpaper]{article} 
\usepackage[preprint]{aaai2027}  
\usepackage[hyphens]{url}  
\usepackage{graphicx} 
\usepackage{natbib}  
\usepackage{caption} 
\usepackage{algorithm}
\usepackage{algorithmic}
\usepackage{amsthm}
\usepackage{bm}
\usepackage{amssymb}
\usepackage{amsmath}
\newtheorem{proposition}{Proposition}
\usepackage{newfloat}
\usepackage{listings}
\DeclareCaptionStyle{ruled}{labelfont=normalfont,labelsep=colon,strut=off} 
\floatstyle{ruled}
\newfloat{listing}{tb}{lst}{}
\floatname{listing}{Listing}

\usepackage{booktabs}

\title{Constrained Probabilistic Numerics: \\ Gaussian Process Solvers for Linear PDEs under Linear Inequality Constraints}
\author{
    Boyuan Deng\textsuperscript{\rm 1},
    Kshitiz Upadhyay\textsuperscript{\rm 2},
    Michael Shields\textsuperscript{\rm 1}\corresponding
}
\affiliations{
    \textsuperscript{\rm 1}Department of Civil and Systems Engineering, Johns Hopkins University\\
    \textsuperscript{\rm 2}Department of Aerospace Engineering and Mechanics, University of Minnesota\\

    bdeng8@jh.edu, kshitizu@umn.edu, michael.shields@jhu.edu

}

\begin{document}

\maketitle

\begin{abstract}
    Gaussian process solvers for linear partial differential equations (PDEs) return posteriors rather than point estimates, but they condition only on equality information: the PDE residual, boundary data, and measurements. Many solutions of interest also obey inequality constraints, including nonnegativity, monotonicity, and absence of arbitrage, which equality conditioning cannot enforce. In the absence of inequality constraints, the equality posterior can place probability mass on physically or economically inadmissible solutions. We prove that conditioning a Gaussian process prior jointly on linear equality observations and a finite-dimensional inequality event is equivalent to conditioning sequentially in terms of Borel measures on the solution space. The physics-informed posterior is therefore a valid prior for the inequality update at virtual points. The combined law is non-Gaussian yet computable through truncated-Gaussian moments, and neither the PDE conditioning nor the inequality update requires modification. We evaluated the construction on a Poisson benchmark, option-price surfaces, and air-quality fields.
    Under incomplete boundary data, a monotonicity constraint lowered the error of a two-dimensional Poisson reconstruction by $63\%$. On a real S\&P~500 option chain thinned to $20$ retained quotes, no-arbitrage constraints reduced held-out extrapolation error by $22\%$ and cut the calendar-arbitrage rate from $10\%$ to $1\%$. On four real air-quality snapshots, nonnegativity reduced a Gaussian moment-based sub-zero-tail proxy by $74\%$ at the sparsest monitor count.
    Changes in mean accuracy and constraint diagnostics differed across data regimes. The probability of the noisy virtual constraint event under the equality posterior measures agreement with the specified bounds before the update. The construction turns admissibility from a post-processing step into conditioning information for probabilistic PDE solvers.
\end{abstract}

\begin{links}
    \link{Code}{https://github.com/bydeng01/linpde-gp}
\end{links}

\section{Introduction}

Linear partial differential equations (PDEs) underpin mechanistic models throughout science and engineering, from heat transfer and electromagnetism to wave propagation and the pricing of financial derivatives. The numerical methods that solve them, such as collocation, finite-element, and spectral schemes, return point estimates whose approximation error is seldom expressed in a form that downstream inference can use. Because such solvers are routinely embedded inside larger scientific models, ignoring that error can silently bias the conclusions drawn from them. Probabilistic numerics addresses this by recasting numerical computation as Bayesian inference \citep{hennig2015probabilistic,hennig2022probabilistic, cockayne2019bayesian}. A solver then returns a posterior rather than a single estimate, and its uncertainty propagates coherently through a modeling pipeline. Gaussian processes (GPs) realize this idea naturally for PDEs \citep{rasmussen2006gaussian}. Because a GP is closed under linear operators, observations of a PDE residual are just another linear functional of the unknown solution, an insight exploited since early work on operator observations \citep{graepel2003solving,sarkka2011linear}. Conditioning on the residual at a finite number of collocation points then yields a tractable posterior over the solution \citep{raissi2017machine,chen2021solving}. \citet{pfortner2022physics} make the connection exact. Their physics-informed GP regression conditions on observations made through an arbitrary bounded linear operator, strictly generalizing the method of weighted residuals and unifying collocation, finite-element, and spectral solvers into one framework.

This formalism handles equality information in closed form and with calibrated uncertainty. The PDE residual, the boundary conditions, and noisy measurements all enter as linear equality observations. Yet many solutions of interest also obey inequality constraints that equality conditioning cannot enforce. A concentration field is nonnegative, a transport solution is monotone, and an arbitrage-free option surface is convex in strike and monotone in maturity. Such constraints encode physical or economic admissibility, and in some domains they are theorems rather than modeling choices \citep{carr2005note}. Violating them leaves a posterior that is not merely inaccurate, but inadmissible because its mean and its sample paths can fall outside the feasible set. As a result, downstream computations then inherit mass on impossible states. Inequality constraints of this kind have been addressed separately in the constrained GP regression literature. We specifically adopt the linear-operator construction of \citet{agrell2019gaussianprocesseslinearoperator}, which conditions on bounds for noisy operator evaluations at finitely many virtual locations.

These two capabilities have largely developed independently. Physics-informed GP solvers condition on a linear differential operator but cannot impose an inequality on the solution. Constrained GPs impose inequalities on a data-fit prior but do not condition on a governing PDE. A recent survey catalogues the two as distinct constraint classes \citep{swiler2020survey}. A prominent learning-based alternative, the standard physics-informed neural network (PINN), instead imposes governing laws through soft residual penalties and, when uncertainty-aware, relies on numerical or approximate inference rather than a closed-form posterior \citep{karniadakis2021physics}. We review both approaches in Section~\ref{sec:related_work}. To our knowledge, no prior framework combines exact linear-PDE operator conditioning with linear inequality constraints in a single posterior.

We address this gap with a two-stage construction that assigns distinct roles to equality and inequality information. We first condition a GP prior on exact or noisy linear equality observations from the PDE, boundary conditions, and data. This yields the physics-informed GP posterior of \citet{pfortner2022physics}, which we call the linear PDE Gaussian process (LinPDE-GP) posterior. We then condition this posterior on finite-dimensional linear inequality information at virtual locations using the update of \citet{agrell2019gaussianprocesseslinearoperator}. The resulting posterior is non-Gaussian, but exact identities express the means and covariances of its finite-dimensional marginals through numerically evaluated truncated-Gaussian moments. We study the construction on a Poisson problem, option-price surfaces, and air-quality fields, including real S\&P~500 option data and OpenAQ monitor networks. The experiments separately assess changes in posterior-mean accuracy and constraint diagnostics across data regimes.

This paper makes three contributions spanning theory, computation, and empirical evaluation. First, we fix an explicit Gaussian version of the equality-conditioned law and prove that sequential and joint conditioning yield the same Borel probability measure on the ambient function space (Proposition~\ref{prop:sequential-conditioning}). Second, we develop a practical two-stage solver that augments LinPDE-GP with virtual-point inequality constraints and reuses closed-form operator cross-covariances. Third, we evaluate how adding inequalities changes posterior-mean accuracy and constraint diagnostics as observations become noisier or fewer. We also record $P(C\mid E)$, the probability of the noisy virtual constraint event under the equality posterior.

\section{Related Work}
\label{sec:related_work}

The proposed method merges two existing constrained GP formulations that have advanced largely in parallel: probabilistic solvers that condition Gaussian processes on linear operators, and constrained Gaussian processes that enforce inequalities. We also relate it to physics-informed neural networks (PINNs) and to arbitrage-free option-surface modeling. This section briefly reviews these underlying components.

\paragraph{Probabilistic numerics and GP PDE solvers.}
Probabilistic numerics recasts numerical computation as Bayesian inference \citep{hennig2015probabilistic,cockayne2019bayesian,hennig2022probabilistic}. GPs suit differential equations because they are closed under linear operators. \citet{graepel2003solving} first solved noisy linear operator equations by GP regression. Later, \citet{sarkka2011linear} treated linear operators and stochastic PDEs in the same conditioning framework. Meanwhile, probabilistic meshless methods have been shown to recover symmetric collocation as a posterior mean \citep{cockayne2017probabilistic}. Conditioning on a linear PDE residual at a finite number of points yields a tractable posterior over the solution \citep{raissi2017machine,chen2021solving}. \citet{pfortner2022physics} made this precise by showing that conditioning through an arbitrary bounded linear operator recovers and strictly generalizes the method of weighted residuals, with supporting measure-theoretic guarantees on separable Banach path spaces. The proposed equality stage uses exactly this construction. \citet{chen2022apik} incorporate PDE information into kriging at selected points and actively choose their locations to reduce predictive uncertainty. \citet{li2024parameter} develop PDE-informed Gaussian processes for parameter inference from sparse, noisy observations, including nonlinear PDEs through an augmented formulation. Our construction adds a finite-dimensional inequality update to the Gaussian posterior obtained by conditioning on linear PDE observations. 

\paragraph{Encoding linear constraints in the GP prior.}
A complementary line of research builds linear equality constraints into the prior, so every realization satisfies them by construction. \citet{jidling2017linearly} model the target as a linear transformation of an underlying GP. \citet{lange2018algorithmic,lange2021linearly} then developed this algorithmically, using Gr\"obner bases to parametrize solution sets of linear differential systems. \citet{lange2021linearly} further incorporates boundary conditions. These priors are exact but target only equality constraints and build a bespoke prior per operator, encoding no inequalities. The proposed solver keeps the generic LinPDE-GP recipe and adds inequalities on top.

\paragraph{Inequality and shape-constrained GPs.}
There is also a mature literature incorporating inequality constraints into GP regression. Early work imposed monotonicity through virtual derivative observations \citep{riihimaki2010gaussian}. \citet{golchi2015monotone} incorporate derivative-sign information at selected locations into Bayesian GP emulation, with joint inference over GP parameters, function values, and derivative values. \citet{da2012gaussian} computed constrained predictive moments from the moments of a truncated multivariate Gaussian (the same moment-based route followed by our posterior summaries) and \citet{wang2016estimating} enforced shape constraints at virtual locations with a Gibbs scheme. A finite-dimensional basis approximation can instead encode inequality constraints so that boundedness, monotonicity, and/or convexity hold over the entire domain \citep{maatouk2017gaussian}. Conditioning at a finite number of virtual locations makes the posterior a truncated Gaussian \citep{lopez2018finite,agrell2019gaussianprocesseslinearoperator}. \citet{agrell2019gaussianprocesseslinearoperator} formulated the problem for arbitrary linear operators with exact truncated-Gaussian sampling by minimax tilting \citep{botev2017normal}, which is the approach adopted in this work. This family scales poorly with dimension, motivating the development of dimension-reduction schemes \citep{bachoc2022sequential}. However, all of these methods impose inequalities on a data-fit prior, never on a PDE-conditioned one \citep{swiler2020survey}. Proposition~\ref{prop:sequential-conditioning} (Section~\ref{sec:theory} in this work) enables the application of inequality constraints on PDE-conditioned GP posterior.

\paragraph{Physics-informed neural networks.}
Beyond the GP setting, PINNs are the prevailing tool for embedding governing equations into learning, imposing the PDE as a soft penalty in a non-convex objective \citep{raissi2019physics,karniadakis2021physics}. Later work adds hard constraints through penalty and augmented-Lagrangian schemes \citep{lu2021physics}, and uncertainty through Bayesian PINNs \citep{yang2021b}. Yet the two capabilities we need remain in tension. Soft penalties do not guarantee exact constraint satisfaction, and the uncertainty is approximate rather than a conditional law. Our construction conditions exactly on a finite-dimensional inequality event, defined on noisy observations at virtual points. The inequality therefore enters by truncation at the price of linearity.

\paragraph{Arbitrage-free option-price surfaces.}
Our finance experiment connects to no-arbitrage surface construction, where admissibility is determined by a theorem stating that absence of static arbitrage forces price bounds, monotonicity in maturity, and convexity in strike \citep{carr2005note}. Additionally, the surface solves Dupire's forward equation under a local-volatility model \citep{dupire1994pricing}. Shape-restricted nonparametric option pricing enforces these constraints in estimation \citep{ait2003nonparametric}, and \citet{fengler2009arbitrage} imposes them on smoothing splines. Closer to our setting, GP approaches add uncertainty quantification. \citet{cousin2016kriging} combine market-fit equality with monotonicity and positivity in a kriging term-structure model, and \citet{chataigner2021beyond} propose a GP-based approach that is provably arbitrage-free and yields a local-volatility surface. In both, the equality information is quote interpolation or calibration, and the differential structure enters, if at all, through the generating model rather than as conditioning information. Neither conditions on the Dupire operator itself. Our equality stage does exactly that, and the shape constraints then act on the operator-conditioned posterior.

\paragraph{Positioning.}
Across these strands, physics-informed GP solvers condition on a linear operator but impose no inequalities. Constrained GPs, conversely, impose inequalities on a data-fit or interpolation prior without conditioning on a PDE. The closest precedents combine interpolation or calibration with shape constraints \citep{cousin2016kriging,chataigner2021beyond}, not operator conditioning with shape constraints. We link the two with Proposition~\ref{prop:sequential-conditioning}, which makes the LinPDE-GP posterior a valid prior for Agrell's update, which we show is equivalent to joint conditioning. We are not aware of prior work that combines operator-level PDE conditioning with linear inequality constraints in one posterior.

\section{Problem Formulation}
\label{sec:problem_formulation}

\paragraph{Equality information.}

Let $\mathcal D$ denote the problem domain and let $u\sim\mathcal{GP}(m_0,k_0)$ be a Gaussian process prior with index set $\mathcal D$ on a probability space $(\Omega,\mathcal F,\mathbb P)$. Let $U\subset\mathbb R^{\mathcal D}$ be a real separable reproducing kernel Banach space (RKBS), a Banach space of real-valued functions on $\mathcal D$ on which every point evaluation $\delta_x$, $x\in\mathcal D$, is a continuous linear functional. Point evaluation is therefore itself a bounded linear functional, and both the prediction values and the virtual observations below are built from it. We assume throughout that $u$ satisfies Assumption~1 of \citet{pfortner2022physics}: its sample paths lie in $U$, and the path map $\omega\mapsto u(\cdot,\omega)$ is a $U$-valued Gaussian random element. Every bounded linear image of $u$ in a finite-dimensional space is then a Gaussian random vector (Theorem~1 of \citealp{pfortner2022physics}). The path-space structure behind this assumption is set out in the Theoretical Framework section, where Proposition~\ref{prop:sequential-conditioning} requires it.

A bounded linear operator $L_{\mathrm{eq}}:U\to\mathbb R^{n_{\mathrm{eq}}}$ collects the finite-dimensional equality information.
\begin{equation}
    Y:=L_{\mathrm{eq}}u+\varepsilon_{\mathrm{eq}},\quad
    \varepsilon_{\mathrm{eq}}\sim\mathcal N(0,\Sigma_{\mathrm{eq}}),\quad
    \Sigma_{\mathrm{eq}}\succeq0.
\end{equation}
Here $\varepsilon_{\mathrm{eq}}$ is independent of $u$, and $\Sigma_{\mathrm{eq}}$ permits noise-free and noisy coordinates.
We fix $y\in\mathbb R^{n_{\mathrm{eq}}}$ and write $E$ for the realized equality information $Y=y$. When needed, we partition $Y$ and $y$ into mechanistic blocks $(Y_{\mathrm{phys}},y_{\mathrm{phys}})$ and observational blocks $(Y_{\mathrm{obs}},y_{\mathrm{obs}})$, and denote their realized information by $E_{\mathrm{phys}}$ and $E_{\mathrm{obs}}$. Unless stated otherwise, $E$ denotes both blocks jointly, with any absent block omitted. We label observation-only baselines explicitly and use $+E_{\mathrm{phys}}$ for the addition of the mechanistic block. Under these assumptions, the equality-conditioned process remains Gaussian \citep{pfortner2022physics}. Since $Y=y$ is generally a probability-zero event, the next section fixes the corresponding $U$-valued regular conditional distribution.

\paragraph{Finite-dimensional inequality information.}
We group the inequality constraints into $J$ families, each collecting those that share one linear operator. For each family $j=1,\ldots,J$, choose a scalar-valued linear function-space operator $\mathcal L_j$ and an ordered virtual tuple $X_j^v=(x_{j,1}^v,\ldots,x_{j,m_j}^v)\in\mathcal D^{m_j}$. Let $\delta_{X_j^v}:\mathbb R^{\mathcal D}\to\mathbb R^{m_j}$ be the linear evaluation map $\delta_{X_j^v}v:=\bigl(v(x_{j,1}^v),\ldots,v(x_{j,m_j}^v)\bigr)$, and require $B_j:=\delta_{X_j^v}\circ\mathcal L_j:U\to\mathbb R^{m_j}$ to be bounded. We stack these maps to obtain the finite-dimensional inequality operator $L_{\mathrm{ineq}}:U\to\mathbb R^d$, which is bounded because each $B_j$ is.
\begin{equation}
L_{\mathrm{ineq}}u:=\Bigl((B_1u)^\top,\ldots,(B_Ju)^\top\Bigr)^\top\in\mathbb R^d,\quad
d:=\sum_{j=1}^Jm_j.
\end{equation}
Here $d$ is the number of scalar virtual variables, pooled across the $J$ families. Stacking lets families with different operators, virtual locations, and bound types be written as one box event. Let $a\in(\mathbb R\cup\{-\infty\})^d$ and $b\in(\mathbb R\cup\{+\infty\})^d$ satisfy $a_i<b_i$ componentwise. Let $\varepsilon_v\sim\mathcal N(0,\Sigma_v)$ be independent of $(u,\varepsilon_{\mathrm{eq}})$, with $\Sigma_v\succ0$. The virtual vector and finite-dimensional inequality event are defined by
\begin{equation}
Z:=L_{\mathrm{ineq}}u+\varepsilon_v,
\qquad
C:=\{a\le Z\le b\}.
\end{equation}
The coordinates of $Z$ are the noisy virtual variables. Agrell's construction uses independent virtual noise $\Sigma_v=\sigma_v^2I$ \citep{agrell2019gaussianprocesseslinearoperator}, whereas Proposition~\ref{prop:sequential-conditioning} permits any positive-definite $\Sigma_v$. Because $\operatorname{Cov}(Z\mid Y=y)\succeq\Sigma_v\succ0$ and $a_i<b_i$, $\mathbb P(C\mid Y=y)>0$, so the target law below is well defined. Conditioning on $C$ restricts the noisy virtual vector, so the update is soft at the selected locations and does not enforce the inequalities throughout the domain.

\paragraph{Target law and diagnostic.}
The inferential target is the following conditional law:
\begin{equation}
\mathcal L(u\mid E,C)
:=
\mathcal L(u\mid Y=y,C)
=
\mathcal L(u\mid Y=y,\,a\le Z\le b).
\end{equation}
Although $u\mid E$ is Gaussian, $u\mid E,C$ is generally non-Gaussian. The screening diagnostic is the following conditional probability:
\begin{equation}
\label{eq:constraint-probability}
P(C\mid E)
:=
\mathbb P(C\mid Y=y)
=
\mathbb P(a\le Z\le b\mid Y=y).
\end{equation}
Here $P(C\mid E)$ is shorthand under the fixed regular conditional version, rather than an ordinary probability ratio involving the singleton event $Y=y$. This diagnostic is a $d$-dimensional Gaussian box probability under the equality-conditioned law of $Z$. It depends on $L_{\mathrm{ineq}}$, $a$, $b$, $\Sigma_v$, and the full conditional mean and covariance of $Z$. It therefore changes with the operator family, dimension, virtual locations, marginal scales and correlations. Comparisons are most defensible within a fixed constraint design, rather than as a normalized score across applications.

\section{Theoretical Framework}
\label{sec:theory}

Combining the two methods raises a well-posedness question. The law returned by the PDE solve must be a legitimate prior for the constraint update. The following proposition settles it. Conditioning jointly on the linear equality observations and on the inequality event yields the same Borel measure on $U$ as first forming the Gaussian LinPDE-GP posterior and then applying the finite-dimensional inequality update of \citet{agrell2019gaussianprocesseslinearoperator}. The constrained law is no longer Gaussian, but its finite-dimensional marginals retain Agrell's representation as a compound Gaussian with a truncated Gaussian mean, that is, Gaussian conditionals mixed over a truncated Gaussian virtual-observation variable.

Proposition~\ref{prop:sequential-conditioning} is a statement about measures on $U$, so we set out what Assumption~1 of \citet{pfortner2022physics} provides. The path map $\omega\mapsto u(\cdot,\omega)$ is measurable from $(\Omega,\mathcal F)$ to $\bigl(U,\mathcal B(U)\bigr)$, and its law is a Gaussian measure on $U$, meaning that every continuous linear functional $\ell\in U'$ is univariate Gaussian under it. Given that the sample paths lie in $U$, separability reduces the remaining requirement to Gaussianity of $\omega\mapsto\ell[u(\cdot,\omega)]$ for every $\ell\in U'$. Weak-$*$ sequential density of $\operatorname{span}\{\delta_x\}_{x\in\mathcal D}$ in $U'$ is sufficient for this (Corollary~B.7 of \citealp{pfortner2022physics}). That half of the assumption is therefore often a property of the topology of $U$, whereas path inclusion depends on the regularity of $m_0$ and $k_0$. Section~\ref{sec:method} verifies both for the priors used in our experiments.

\paragraph{Proof roadmap.}
The argument moves between three levels: a measure on the path space $U$, its finite-dimensional images, and a conditioning event defined on those images. It runs in five steps. \emph{Step~1} singles out one concrete Gaussian version of the law of $u$ given $Y=y$, which is needed because that event need not have positive probability. \emph{Step~2} observes that the joint and the sequential construction each pair this equality-conditioned process with independent virtual noise, and that the two pairs have the same joint law. \emph{Step~3} applies one map to both pairs, sending each to a finite-dimensional linear image of the path together with the virtual vector, so the two routes again agree. \emph{Step~4} restricts the virtual vector to $[a,b]$, turning each law into Gaussian conditionals averaged over a truncated Gaussian, the representation the solver of Section~\ref{sec:method} evaluates. \emph{Step~5} lifts agreement of every finite-dimensional linear image to agreement of the two measures on all of $\mathcal B(U)$, which is the assertion of the proposition. Step~1 is carried out immediately below, because the statement of Proposition~\ref{prop:sequential-conditioning} already refers to the law it produces; steps~2 to~5 are the proof.

We now carry out step~1 and fix a version of the equality-conditioned law. Theorem~1 of \citet{pfortner2022physics} states that the process conditioned on $Y=y$ is again Gaussian. We additionally need this conditional law as a measure on $U$, which follows from their results by a short argument that we record here. Let $G:=L_{\mathrm{eq}}k_0L_{\mathrm{eq}}'+\Sigma_{\mathrm{eq}}$ and let $\Phi:\mathbb R^{n_{\mathrm{eq}}}\to U$ be the bounded finite-rank operator $\Phi z:=\sum_{j}(G^{\dagger}z)_{j}\,r_{j}$, built from the representers $r_{j}:x\mapsto L_{\mathrm{eq}}[k_0(x,\cdot)]_{j}$, which lie in $U$ (Proposition~B.13 and Lemma~B.4 of \citealp{pfortner2022physics}). Since $\Sigma_{\mathrm{eq}}$ is only positive semi-definite, $G$ may be singular; the cross-covariance nonetheless vanishes, by the kernel argument of Appendix~\ref{app:measure}. The pair $(u-\Phi Y,\,Y)$ is jointly Gaussian with vanishing cross-covariance, hence independent, so
\begin{equation}
\mathcal L(u\mid Y=y):=\mathcal L\bigl(u-\Phi Y+\Phi y\bigr)
\end{equation}
is a Gaussian measure on $U$, defines a regular conditional distribution of $u$ given $Y$, meaning a family of genuine probability measures on $U$ indexed by the observed value (Appendix~\ref{app:measure}), and has the conditional mean and covariance of Theorem~1 of \citet{pfortner2022physics}. Because $\varepsilon_v$ below is independent of $(u,Y)$, we correspondingly set $\mathcal L\bigl((u,\varepsilon_v)\mid Y=y\bigr):=\mathcal L(u\mid Y=y)\otimes\mathcal N(0,\Sigma_v)$. We use these versions throughout, so no almost-sure qualifications on $y$ are needed.

Step~1 is complete. The law $\mathcal L(u\mid Y=y)$ is now a Gaussian measure on $U$, and the proposition below is stated against this version.

\begin{proposition}[Sequential conditioning with LinPDE-GP and Agrell constraints]
\label{prop:sequential-conditioning}
Adopt the setup and assumptions of Section~\ref{sec:problem_formulation}, in particular $\Sigma_v\succ0$ and $a_i<b_i$ componentwise. Fix $y\in\mathbb R^{n_{\mathrm{eq}}}$ and use the regular conditional version fixed above, under which $\operatorname{Cov}(Z\mid Y=y)\succeq\Sigma_v\succ0$, so $\mathbb P(C\mid Y=y)>0$ and
\begin{equation}
\mathcal L(u\mid Y=y,C)(A)
:=\frac{\mathbb P(u\in A,\,C\mid Y=y)}{\mathbb P(C\mid Y=y)},\quad
A\in\mathcal B(U),
\end{equation}
is well defined. Let $u_{\mathrm{cond}}\sim\mathcal L(u\mid Y=y)$, let $\widetilde\varepsilon_v\sim\mathcal N(0,\Sigma_v)$ be independent of $u_{\mathrm{cond}}$, and define
\begin{equation}
C_{\mathrm{seq}}
:=
\{a\le L_{\mathrm{ineq}}u_{\mathrm{cond}}
+\widetilde\varepsilon_v\le b\}.
\end{equation}
Then
\begin{equation}
\mathcal L(u\mid Y=y,C)
=
\mathcal L(u_{\mathrm{cond}}\mid C_{\mathrm{seq}})
\end{equation}
as Borel probability measures on $U$.
\end{proposition}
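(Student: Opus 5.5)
The plan follows the roadmap (steps 2--5), working at the level of joint laws of pairs.

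\emph{Step 2: identify the two joint laws.} By the version fixed in step~1, $\mathcal L\bigl((u,\varepsilon_v)\mid Y=y\bigr)=\mathcal L(u\mid Y=y)\otimes\mathcal N(0,\Sigma_v)$. By construction, $(u_{\mathrm{cond}},\widetilde\varepsilon_v)$ also has law $\mathcal L(u\mid Y=y)\otimes\mathcal N(0,\Sigma_v)$ on $U\times\mathbb R^d$. Call this common product measure $\mu$. Both events have the same form. We have $C=\{a\le L_{\mathrm{ineq}}u+\varepsilon_v\le b\}$ and $C_{\mathrm{seq}}=\{a\le L_{\mathrm{ineq}}u_{\mathrm{cond}}+\widetilde\varepsilon_v\le b\}$. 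Each is the preimage of the set $K:=\{(v,e):a\le L_{\mathrm{ineq}}v+e\le b\}$ under the respective pair. Since $L_{\mathrm{ineq}}$ is bounded, the map $(v,e)\mapsto L_{\mathrm{ineq}}v+e$ is continuous, so $K$ is Borel in $U\times\mathbb R^d$. Separability of $U$ gives $\mathcal B(U\times\mathbb R^d)=\mathcal B(U)\otimes\mathcal B(\mathbb R^d)$, so $\mu$ is well defined on $K$.

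\emph{The direct identity.} For $A\in\mathcal B(U)$, both sides of the claim equal $\mu\bigl((A\times\mathbb R^d)\cap K\bigr)/\mu(K)$. On the left, $\mathbb P(u\in A,C\mid Y=y)$ is by definition the $\mathcal L((u,\varepsilon_v)\mid Y=y)$-measure of $(A\times\mathbb R^d)\cap K$. On the right, the ordinary conditional probability is computed under $\mathcal L(u_{\mathrm{cond}},\widetilde\varepsilon_v)=\mu$. The denominators agree, and they are positive. To see positivity, note that $\mathcal L(L_{\mathrm{ineq}}v+e)$ under $\mu$ is Gaussian, with covariance $L_{\mathrm{ineq}}k_{\mathrm{cond}}L_{\mathrm{ineq}}'+\Sigma_v\succeq\Sigma_v\succ0$. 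A nondegenerate Gaussian gives positive mass to any box $[a,b]$ with $a_i<b_i$. This already proves equality as Borel measures on $U$, so I expect the core argument to be short.

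\emph{Steps 3--5.} To connect with the solver and the roadmap, I would also push $\mu$ forward by $(v,e)\mapsto(Lv,\,L_{\mathrm{ineq}}v+e)$ for an arbitrary bounded $L:U\to\mathbb R^k$. Both routes give the same Gaussian vector, and conditioning its second block on $[a,b]$ gives Agrell's compound representation. That is, $Lu$ given $Z=z$ is Gaussian, with the mean affine in $z$, mixed over $z\sim$ the truncated Gaussian on $[a,b]$. Step~5 then recovers the full measure identity from agreement of all such images. Finite-dimensional cylinder sets generated by $U'$ form a $\pi$-system, and it generates $\mathcal B(U)$ for separable $U$. A Dynkin ($\pi$--$\lambda$) argument then extends the identity to all of $\mathcal B(U)$.

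\emph{Main obstacle.} The delicate point is not the algebra but the measure-theoretic bookkeeping. First, $\mathbb P(\cdot\mid Y=y)$ must mean the fixed version with the product structure. I would check that this version is a regular conditional distribution of $(u,\varepsilon_v)$ given $Y$, using the independence of $\varepsilon_v$ from $(u,\varepsilon_{\mathrm{eq}})$. Second, $K$ must be jointly measurable, which uses separability of $U$. I would settle both explicitly before the one-line identity.
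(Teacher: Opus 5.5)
Your proposal is correct, and its core argument is shorter than the paper's proof. Your step~2 matches the paper's step~2: both $\mathcal L\bigl((u,\varepsilon_v)\mid Y=y\bigr)$ and $\mathcal L(u_{\mathrm{cond}},\widetilde\varepsilon_v)$ equal the product measure $\mu=\mu_y\otimes\gamma_v$. After that, your observation that both sides, evaluated at any $A\in\mathcal B(U)$, equal $\mu\bigl((A\times\mathbb R^d)\cap K\bigr)/\mu(K)$ already proves the proposition on all of $\mathcal B(U)$.

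The paper proceeds differently. It pushes $\mu$ forward through $T(v,e)=(L_*v,\,L_{\mathrm{ineq}}v+e)$ for every bounded $L_*$, disintegrates the finite-dimensional Gaussian pair over $Z$ to condition on the box, and then lifts agreement of all finite-dimensional images to $\mathcal B(U)$ using cylinder sets, separability and Dynkin's $\pi$--$\lambda$ theorem. For the identity alone this machinery is redundant. Conditioning on an event of positive probability needs no disintegration, and the ratio is already fixed by the joint law at the path-space level, so the cylinder-set lifting is unnecessary. What the paper's detour buys is the compound-Gaussian representation of step~4: Gaussian conditionals of $L_*u$ given $Z=z$, mixed over the truncated Gaussian. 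That representation justifies the moment formulas used by the solver in Section~\ref{sec:method}, and you rightly treat it as a supplementary computation rather than part of the proof. In your write-up, the $\pi$--$\lambda$ step in your steps~3--5 can simply be dropped.

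Two small remarks. First, measurability of $K$ does not need separability. The map $(v,e)\mapsto L_{\mathrm{ineq}}v+e$ is a sum of $\mathcal B(U)\otimes\mathcal B(\mathbb R^d)$-measurable maps, so $K$ lies in the product $\sigma$-algebra on which $\mu$ is defined. Second, checking that the product version is a regular conditional distribution of $(u,\varepsilon_v)$ given $Y$ is reassuring but not required. The statement is made against that fixed version by definition.
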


\begin{proof}[Proof of Proposition~\ref{prop:sequential-conditioning}]
We carry out steps~2 to~5 of the roadmap; step~1 is the version fixed above. Write $\mu_y:=\mathcal L(u\mid Y=y)$ for that version and $\gamma_v:=\mathcal N(0,\Sigma_v)$.

\emph{Step 2: both routes start from the same law.}
On the joint route, $\varepsilon_v$ is independent of $(u,Y)$, so the version fixed above gives $\mathcal L\bigl((u,\varepsilon_v)\mid Y=y\bigr)=\mu_y\otimes\gamma_v$. This product measure is the joint law of two independent ingredients, the equality-conditioned path and the virtual noise. On the sequential route, $u_{\mathrm{cond}}\sim\mu_y$ and $\widetilde\varepsilon_v\sim\gamma_v$ are independent by construction, so $\mathcal L\bigl(u_{\mathrm{cond}},\widetilde\varepsilon_v\bigr)=\mu_y\otimes\gamma_v$ as well. The two routes therefore begin from one and the same measure on $U\times\mathbb R^d$.

\emph{Step 3: both routes apply the same map.}
Fix $q\in\mathbb N$ and a bounded linear $L_*:U\to\mathbb R^q$, and set
\begin{equation}
T(v,e):=\bigl(L_*v,\;L_{\mathrm{ineq}}v+e\bigr),
\quad
T:U\times\mathbb R^d\to\mathbb R^q\times\mathbb R^d.
\end{equation}
Both $(L_*u,\,Z)$ given $Y=y$ and $\bigl(L_*u_{\mathrm{cond}},\,L_{\mathrm{ineq}}u_{\mathrm{cond}}+\widetilde\varepsilon_v\bigr)$ are images of $\mu_y\otimes\gamma_v$ under $T$: the first by the product version fixed above, the second by the definition of $(u_{\mathrm{cond}},\widetilde\varepsilon_v)$. Equal laws sent through one measurable map have equal images, so the two coincide. We call this the transfer identity. Taking the second marginal gives $\mathbb P(C_{\mathrm{seq}})=\mathbb P(C\mid Y=y)>0$, so $\mathcal L_{\mathrm{seq}}:=\mathcal L(u_{\mathrm{cond}}\mid C_{\mathrm{seq}})$ is well defined.

\emph{Step 4: both routes condition on the same box.}
Keep $L_*$ fixed and write $X:=L_*u$. Being a bounded affine image of the product Gaussian above, $(X,Z)$ is Gaussian given $Y=y$ (Lemma~B.4 of~\citealp{pfortner2022physics}). Because $\operatorname{Cov}(Z\mid Y=y)\succeq\Sigma_v\succ0$, the standard Gaussian conditioning formula defines a version of $\mathcal L(X\mid Y=y,Z=z)$ for every $z$. Its mean is affine in $z$, whereas its covariance is independent of $z$. Step~3 gives the two routes the same joint Gaussian law, so their standard Gaussian conditional versions coincide. Hence no almost-everywhere qualification is needed below. Since $C=\{a\le Z\le b\}\in\sigma(Z)$ and $\mathbb P(C\mid Y=y)>0$, for every Borel $A\subset\mathbb R^q$,
\begin{equation}
\mathbb P(X\in A\mid Y=y,\,C)
=
\frac{\mathbb E\bigl[\mathbf 1_A(X)\,\mathbf 1_{[a,b]}(Z)\,\big|\,Y=y\bigr]}
     {\mathbb P(C\mid Y=y)}.
\end{equation}
We disintegrate the numerator over $Z$, that is, we write it as an average of the Gaussian conditional laws $\mathcal L(X\mid Y=y,Z=z)$ against the law of $Z$ restricted to the box:
\begin{equation}
\mathcal L(X\mid Y=y,C)
=
\int_{[a,b]}
\mathcal L(X\mid Y=y,Z=z)\,
\mathcal L(dz\mid Y=y,C).
\end{equation}
Here $\mathcal L(Z\mid Y=y,C)$ is the Gaussian law $\mathcal L(Z\mid Y=y)$ truncated to $[a,b]$, well defined because $\operatorname{Cov}(Z\mid Y=y)\succeq\Sigma_v\succ0$. The right-hand side is thus Gaussian conditionals averaged over a truncated Gaussian, the compound-Gaussian form that Section~\ref{sec:method} computes. By the transfer identity of step~3, both the Gaussian conditionals and the truncated mixing law coincide with their sequential counterparts, so the mixture equals the law of $L_*u_{\mathrm{cond}}$ conditioned on $C_{\mathrm{seq}}$. Hence, for every bounded linear $L_*:U\to\mathbb R^q$,
\begin{equation}
\mathcal L(L_*u\mid Y=y,C)
=
(L_*)_\#\,\mathcal L_{\mathrm{seq}},
\end{equation}
where $(L_*)_\#\,\mathcal L_{\mathrm{seq}}$ denotes the image of $\mathcal L_{\mathrm{seq}}$ under $L_*$. Since $U$ is an RKBS, $L_*$ may in particular collect finitely many point evaluations, so this is agreement of the two posteriors at every finite set of prediction points.

\emph{Step 5: from finite-dimensional linear images to the path-space law.}
It remains to lift this agreement of finite-dimensional images to $\mathcal B(U)$. A cylinder set asks whether finitely many bounded linear measurements of a path land in a Borel subset of $\mathbb R^q$. Step~4 holds for every bounded linear $L_*$, hence for every finite family $\ell_1,\ldots,\ell_q\in U'$, so it states exactly that the two laws agree on every cylinder set. Intersecting two cylinder sets stacks their functionals, so the cylinder sets are closed under finite intersections, and separability of $U$ makes them generate $\mathcal B(U)$. The sets on which two probability measures agree are closed under complements and countable disjoint unions. Dynkin's $\pi$--$\lambda$ theorem therefore propagates agreement from the generating class to all of $\mathcal B(U)$, and the two laws agree there. Appendix~\ref{app:measure} records these definitions and the theorem.
\end{proof}

Two remarks are in order. First, \citet{agrell2019gaussianprocesseslinearoperator} states the inequality update with independent virtual noise $\Sigma_v=\sigma_v^2I$ and also admits the noiseless limit $\sigma_v=0$; Proposition~\ref{prop:sequential-conditioning} requires only $\Sigma_v\succ0$, and this strict positivity is our standing assumption, which is what renders $\mathbb P(C\mid Y=y)>0$ automatic rather than assumed. Second, the proposition concerns exact laws rather than any particular computation. It establishes that the physics-informed posterior of \citet{pfortner2022physics} is a valid prior for the inequality update of \citet{agrell2019gaussianprocesseslinearoperator}. The next section turns this license into a usable solver.

\section{Proposed Method}
\label{sec:method}

Proposition~\ref{prop:sequential-conditioning} enables a sequential two-update procedure, summarized in Algorithm~\ref{alg:method}. The first update conditions the prior GP on all linear equality information, namely the PDE residual, the boundary data, and any point measurements, and returns the physics-informed Gaussian posterior $u\mid Y=y\sim\mathcal{GP}(m_{\mathrm{phys}},k_{\mathrm{phys}})$ \citep{pfortner2022physics}. The second update conditions this posterior on the inequality event at virtual points. The resulting law is non-Gaussian, so its finite-dimensional marginals follow Agrell's compound-Gaussian construction, and the posterior summaries reduce to truncated-Gaussian moments.

The inequality information enters through the $J$ constraint families of Section~\ref{sec:problem_formulation}, each family $j$ carrying its own function-space operator $\mathcal L_j$ and its own virtual tuple $X_j^v$. This instantiates Proposition~\ref{prop:sequential-conditioning} with $L_{\mathrm{ineq}}$ the stack of the blocks $B_j:=\delta_{X_j^v}\circ\mathcal L_j$. The stack is what collects the $d=\sum_{j}m_j$ scalar virtual quantities of all families into the one vector $Z$, and their bounds into the one box $[a,b]$. Each block must be bounded on $U$. For the squared-exponential priors used here, Assumption~1 of \citet{pfortner2022physics} holds with $U=C^p$-type sample spaces for every $p$ (their Appendix~B.4), so point-evaluated derivatives are continuous and boundedness holds for the two operator classes we instantiate, the identity and first-order partial derivatives. The designs differ in $J$. Nonnegativity of the air-quality field is a single family, and the Poisson conditions $\partial_x u\ge0$ and $\partial_y u\ge0$ are two families on a shared virtual grid. The option-surface price, spread, and calendar conditions are three families on separate grids, with bounds that vary by coordinate. The construction itself accepts any bounded $\mathcal L_j$ with computable cross-covariances, and second-order constraints are left to future work (Appendix~\ref{app:implementation}). Following \citet[Section~3]{agrell2019gaussianprocesseslinearoperator}, the virtual observations carry independent Gaussian noise, so the constraint is enforced softly at the virtual points. Under the stage-one posterior, using the evaluated operator $L_{\mathrm{ineq}}$ defined above, write
\begin{equation}
Z=L_{\mathrm{ineq}}u+\varepsilon_v,
\qquad
\varepsilon_v\sim\mathcal N(0,\Sigma_v),
\end{equation}
with $\Sigma_v=\sigma_v^2I\succ0$ as in Proposition~\ref{prop:sequential-conditioning}. Its mean and covariance are
\begin{equation}
\begin{aligned}
m_Z&=L_{\mathrm{ineq}}m_{\mathrm{phys}}, \\
\Sigma_Z&=L_{\mathrm{ineq}}k_{\mathrm{phys}}L_{\mathrm{ineq}}'+\Sigma_v,
\end{aligned}
\end{equation}
both available in closed form through operator cross-covariances. When $J>1$, $\Sigma_Z$ is block structured, its off-diagonal blocks being the cross-family covariances $(\mathcal L_i k_{\mathrm{phys}}\mathcal L_j')(X_i^v,X_j^v)$. For prediction values $u_*:=u(X^*)$, set
\begin{equation}
A=\operatorname{Cov}(u_*,Z\mid Y=y)\Sigma_Z^{-1},
\end{equation}
and
\begin{equation}
\Sigma_{u_*|Z}
=
\operatorname{Cov}(u_*\mid Y=y)
-
A\operatorname{Cov}(Z,u_*\mid Y=y).
\end{equation}
The constrained predictive law is then represented by
\begin{equation}
\begin{aligned}
z &\sim \mathcal{TN}(m_Z,\Sigma_Z;a,b), \\
u_*\mid z &\sim\mathcal N(\mu_{*|z},\Sigma_{u_*|Z}),\\
\mu_{*|z}&:=m_{\mathrm{phys}}(X^*)+A(z-m_Z).
\end{aligned}
\end{equation}

Posterior summaries follow in closed form from the truncated-Gaussian moments of $z$. Writing $\nu$ and $\Gamma$ for the mean and covariance of $z$, the constrained moments are
\begin{equation}
\begin{aligned}
\mathbb E[u_*\mid Y=y,C]&=m_{\mathrm{phys}}(X^*)+A(\nu-m_Z),\\
\operatorname{Cov}(u_*\mid Y=y,C)&=\Sigma_{u_*|Z}+A\Gamma A^\top,
\end{aligned}
\end{equation}
which specialize the exact compound-moment identities of \citet[Corollary~6]{agrell2019gaussianprocesseslinearoperator} to the two-stage setting. These identities yield the posterior mean and the $1.96$-standard-deviation band. The truncated-Gaussian moments $\nu$ and $\Gamma$ they require have no closed form, and are evaluated by randomized quasi-Monte Carlo, reproducible under a fixed seed but subject to numerical-integration error (Appendix~\ref{app:implementation}). Sample paths, where needed, are drawn with the exact minimax-tilting sampler recommended by \citet[Section~3.2]{agrell2019gaussianprocesseslinearoperator}.

In the implementation, the nonzero stage-one mean is handled by an equivalent shift of the inequality bounds at the virtual points; this and the remaining numerical details are given in Appendix~\ref{app:implementation}.

\begin{algorithm}[tb]
\caption{LinPDE-GP with linear inequality constraints}
\label{alg:method}
\begin{algorithmic}[1]
\STATE {\bfseries Input:} Prior GP $(m_0,k_0)$; equality operator $L_{\rm eq}$, observation $y$, noise covariance $\Sigma_{\rm eq}\succeq0$; inequality operator $L_{\rm ineq}$ assembled from $(\mathcal L_j,X_j^v)_{j=1}^J$, bounds $[a,b]$ (entries in $\mathbb R\cup\{\pm\infty\}$), virtual noise $\Sigma_v=\sigma_v^2I\succ0$; prediction points $X^*$; optional sample count $N$.
\STATE {\bfseries Equality update:}
Condition $u\sim\mathcal{GP}(m_0,k_0)$ on
\[
Y=L_{\rm eq}u+\varepsilon_{\rm eq}=y,
\qquad
\varepsilon_{\rm eq}\sim\mathcal N(0,\Sigma_{\rm eq}),
\]
to obtain $u\mid Y=y\sim\mathcal{GP}(m_{\rm phys},k_{\rm phys})$.
\STATE {\bfseries Virtual inequality variables:}
Compute
\[
m_Z=L_{\rm ineq}m_{\rm phys},
\]
\[
\Sigma_Z=L_{\rm ineq}k_{\rm phys}L_{\rm ineq}'+\Sigma_v.
\]
\STATE {\bfseries Prediction conditioning:}
For $u_*:=u(X^*)$, compute under the stage-one posterior
\[
A=\operatorname{Cov}(u_*,Z\mid Y=y)\Sigma_Z^{-1},
\]
\[
\Sigma_{u_*|Z}
=
\operatorname{Cov}(u_*\mid Y=y)
-
A\operatorname{Cov}(Z,u_*\mid Y=y).
\]
\STATE {\bfseries Moments:}
Compute $\nu=\mathbb E[z]$ and $\Gamma=\operatorname{Cov}(z)$ for $z\sim\mathcal{TN}(m_Z,\Sigma_Z;a,b)$; set
\[
\begin{aligned}
\mu_* &= m_{\rm phys}(X^*)+A(\nu-m_Z), \\
S_* &= \Sigma_{u_*|Z}+A\Gamma A^\top.
\end{aligned}
\]
\IF{$N$ samples are requested}
\STATE Choose $Q$ such that $QQ^\top=\Sigma_{u_*|Z}$.
\FOR{$i=1,\dots,N$}
\STATE Draw $z^{(i)}\sim\mathcal{TN}(m_Z,\Sigma_Z;a,b)$ and $\eta^{(i)}\sim\mathcal N(0,I)$; set
\[
u_*^{(i)}
=
m_{\rm phys}(X^*)
+
A(z^{(i)}-m_Z)
+
Q\eta^{(i)}.
\]
\ENDFOR
\ENDIF
\STATE {\bfseries Return:} Moments $(\mu_*,S_*)$ and, if requested, samples $\{u_*^{(i)}\}_{i=1}^N$.
\end{algorithmic}
\end{algorithm}

\section{Experiments}

In this section, we evaluate whether adding inequality constraints improves reconstruction accuracy, reduces constraint violations, and narrows posterior uncertainty across a set of diverse experiments. The experiments covered a Poisson equation with incomplete observations, option-price surfaces, and nonnegative pollutant concentrations.

\paragraph{Comparisons and replication.}
The primary comparison performed in the following experiments holds the GP prior, equality observations $E$, and evaluation points fixed, then compares the equality posterior $u\mid E$ with the constrained posterior $u\mid(E,C)$. For option prices and synthetic air quality, we also compare observations alone and observations with inequalities but without the PDE. The Poisson ablation additionally applies monotonicity to the prior without any equality observations. In the figures, $E_{\mathrm{phys}}$ or $+E$ denotes equality-type PDE conditioning, and $C$ denotes the inequality constraints.

The synthetic data-assimilation, option-price, and air-quality sweeps each use five noise realizations at fixed, unscrambled Halton observation locations~\citep{halton1960efficiency}. The partial-boundary Poisson reconstruction and its ablation are treated deterministically. The real option-price experiment use 20 seeded quote subsets at each retained-quote count, with a fixed test set. The real air-quality experiment use five seeded train--test splits per snapshot and monitor count.

\paragraph{Accuracy and relative improvement.}
We evaluate the posterior mean $m$ against the known solution or held-out observations by calculating the root-mean-square error for an evaluation set $\mathcal T=\{(\bm{x}_i,y_i)\}_{i=1}^{n_{\mathcal T}}$ as
\[
\operatorname{RMSE}_{\mathcal T}(m)
=
\left[
\frac{1}{n_{\mathcal T}}
\sum_{i=1}^{n_{\mathcal T}}
\bigl(m(\bm{x}_i)-y_i\bigr)^2
\right]^{1/2}.
\]
For any lower-is-better metric $M$ (such as RMSE), we calculate the percentage reduction from adding inequalities as
\begin{equation}
\label{eq:gain}
G_M
=
100\left(
1-\frac{\overline{M}_{E,C}}{\overline{M}_{E}}
\right),
\end{equation}
where the bars denote arithmetic means over noise realizations, quote subsets, or snapshot--split pairs. Thus, $G_M$ compares mean metric values. A negative value of $G_M$ indicates an increase in error or in the relevant diagnostic.

\paragraph{Feasibility and uncertainty.}
We checked constraint violations on each application's evaluation grid $\mathcal G$, including locations outside the virtual-point set. These checks do not guarantee that constraints hold between grid points.

Separately, we recorded $P(C\mid E)$ from Eq.~\eqref{eq:constraint-probability}, the joint probability that all noisy virtual constraints are satisfied before inequality conditioning. We computed this Gaussian box probability numerically using minimax tilting with the full conditional covariance of the virtual variables evaluated over the finite virtual-point set. Comparisons held the virtual locations, operators, bounds, and virtual-noise covariance fixed.

For each posterior being evaluated, $m$ and $s$ denote its mean and marginal standard deviation. Displayed posterior bands span $m(\bm{x})\pm1.96s(\bm{x})$ at each prediction point $\bm{x}$. These are pointwise $95\%$ credible intervals for the Gaussian equality posterior and moment-based dispersion summaries for the non-Gaussian constrained posterior. We report band widths as the average full width over the evaluation grid $\mathcal G$. Appendix~\ref{app:implementation} describes their computation and the virtual-observation noise.

\subsection{Two-dimensional Poisson equation: Monotonicity improves reconstruction with incomplete data}
\label{subsec:poisson2d}
\newcommand{\AssimSeeds}{5}
\newcommand{\AssimNobsFixed}{16}
\newcommand{\AssimSigFixed}{0.1}
\newcommand{\AssimSigLo}{0.02}
\newcommand{\AssimGainSigLo}{6}
\newcommand{\AssimSigHi}{0.2}
\newcommand{\AssimGainSigHi}{26}
\newcommand{\AssimNLo}{10}
\newcommand{\AssimGainNLo}{30}
\newcommand{\AssimNHi}{40}
\newcommand{\AssimGainNHi}{13}
\newcommand{\AssimRmseUnc}{0.095}
\newcommand{\AssimRmseCon}{0.071}
\newcommand{\AssimGain}{25}
\newcommand{\AssimBandUnc}{0.39}
\newcommand{\AssimBandCon}{0.27}
\newcommand{\AssimMaxUnc}{0.33}
\newcommand{\AssimMaxCon}{0.28}
\newcommand{\AssimMinSlope}{-0.47}
\newcommand{\AssimPCE}{0.04}
\newcommand{\AssimGainSigZero}{0.5}
\newcommand{\AssimPCESigZero}{0.998}
\newcommand{\AssimNegCellsSig}{9}
\newcommand{\AssimCellsSig}{20}
\newcommand{\AssimNegCellsN}{7}
\newcommand{\AssimCellsN}{25}

\paragraph{Setup.}
We first consider the Poisson equation, $-\Delta u=f$ on $\mathcal D=(0,1)^2$, with manufactured solution
\[
u_\star(x,y)=\frac{(e^x-1)(e^y-1)}{(e-1)^2}.
\]
This solution increases in both coordinates. We use a zero-mean GP prior with an isotropic squared-exponential covariance, unit amplitude, and lengthscale $0.5$. Equality observations are given by the PDE residual at a $3\times3$ interior grid and $u=0$ at 13 boundary points on the edges $x=0$ and $y=0$. No observations are made along the edges $x=1$ and $y=1$. We impose the inequality constraints $\partial_xu\geq0$ and $\partial_yu\geq0$ on a $4\times4$ virtual grid on $[0.2,0.8]^2$, giving 32 constrained virtual points. Figure~\ref{fig:partialbc}(b) shows the solution and observation locations. We then evaluate reconstruction error on a $30\times30$ grid $\mathcal G$ spanning the full domain. On this grid, we summarize posterior uncertainty by the average full band width
\[
B_{1.96}
=
\frac{1}{|\mathcal G|}
\sum_{\bm{x}\in\mathcal G}2(1.96)s(\bm{x}).
\]

\paragraph{Monotonicity corrects the reconstruction near the unobserved corner.}
Figure~\ref{fig:partialbc}(a) shows the true solution, the solution with equality constraints only, the solution with inequality constraints only, and solution with equality + inequality constraints with $\pm1.96s$ uncertainty bands along the diagonal $x=y$. Conditioning only on the PDE and partial boundary data produces a negative diagonal mean that remains very close to the prior, missing the increase towards $(1,1)$. Adding monotonicity improves the solution and narrows its posterior uncertainty band. The surface plots in Fig.~\ref{fig:partialbc}(c,e) show the corresponding improvement across the full domain. The equality-only mean misses the increasing field, whereas the equality + inequality constrained mean recovers its dominant spatial variation.
\begin{figure*}[!ht]
  \centering
  \includegraphics[width=\textwidth]{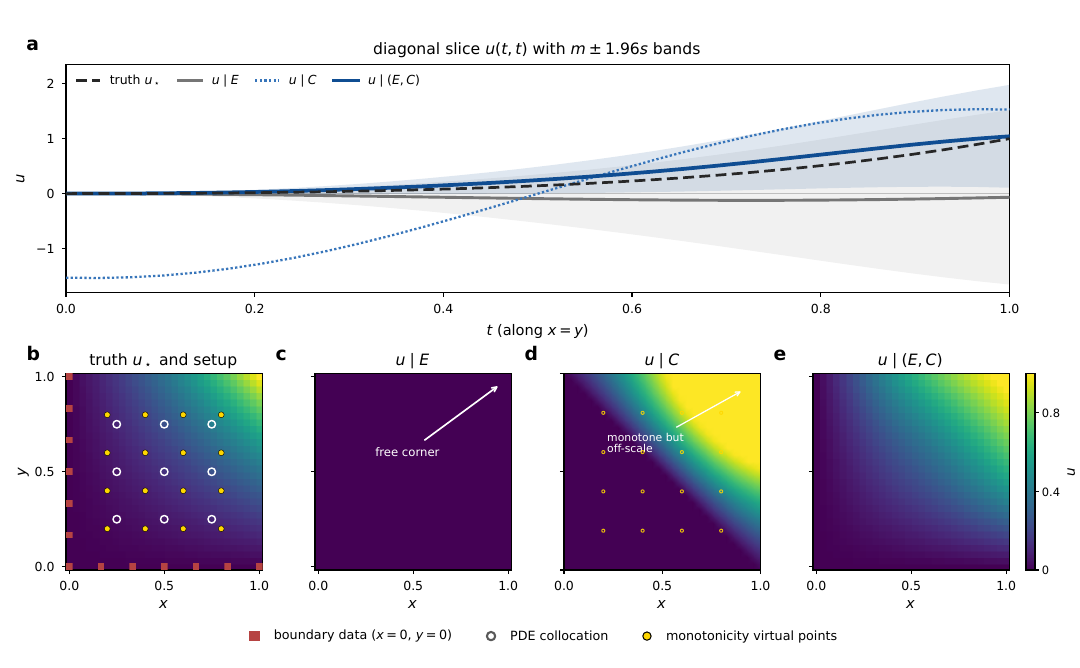}
  \caption{Monotonicity improves Poisson reconstruction with incomplete boundary data. (a) Diagonal posterior means for equality-only ($u|E$), inequality-only ($u|C$), and combined conditioning ($u|(E,C)$), compared with the truth ($u_\star$). Shading shows $m\pm1.96s$ for the equality-only and combined posteriors. (b) Manufactured solution with boundary observations (squares), PDE collocation points (open circles), and monotonicity virtual points (filled circles). (c--e) Posterior means conditioned on the PDE and partial boundary data, monotonicity alone, and both, respectively. Panels (b--e) share a color scale, with values outside $[0,1]$ shown in the endpoint colors.}
  \label{fig:partialbc}
\end{figure*}

Figures~\ref{fig:ablation}(a,b) show that the largest error reduction occurs near the unobserved corner when monotonicity constraints are added to the equality constraints. Table~\ref{tab:poisson2d_partialBC} provides a quantitative comparison of the solutions with equality constraints only and those that also include monotonicity constraints. By adding the inequality constraints, the absolute error at $(1,1)$ reduces from $1.07$ to $0.04$, and the full-grid RMSE reduces from $0.33$ to $0.12$, a $63\%$ reduction. The spatial error patterns are illustrated in Figure~\ref{fig:ablation}(a,b). The average posterior band width decreases from $0.93$ to $0.52$, with the corresponding half-widths shown in Figure~\ref{fig:ablation}(c,d). Residual uncertainty remains largest at $(1,1)$, and an overshoot of up to $0.24$ remains along the unobserved edges. The virtual constraint event has probability $P(C\mid E)=0.11$ before adding monotonicity.

\begin{figure*}[!ht]
  \centering
  \includegraphics[width=0.92\textwidth]{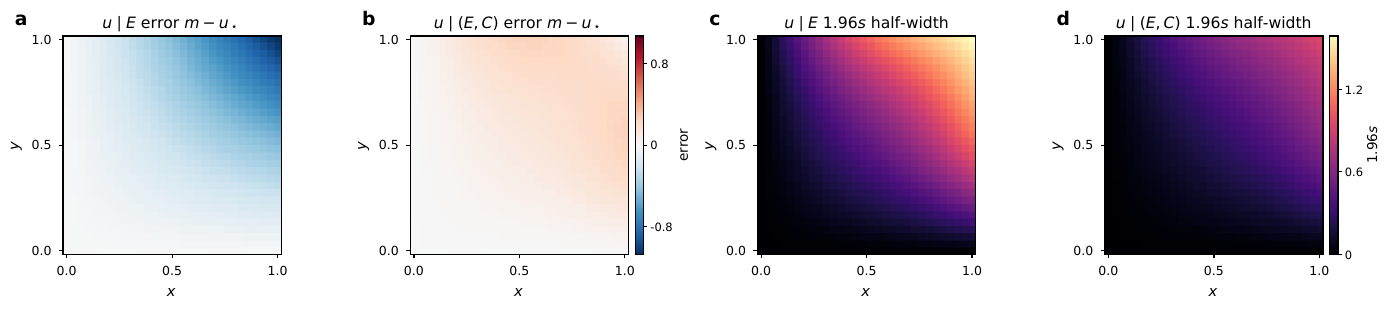}
  \caption{Signed reconstruction errors and posterior uncertainty for the Poisson problem with incomplete boundary data. Equality data $E$ comprise the PDE and partial boundary data; $C$ denotes the monotonicity constraints. (a,b) Signed errors $m-u_\star$ of the equality-only and combined posterior means, respectively. (c,d) Corresponding posterior half-widths $1.96s$, where $s$ is the posterior standard deviation, showing uncertainty magnitude alone. Each pair shares a color scale. Uncertainty is largest at the unobserved corner $(1,1)$ and is reduced by conditioning on monotonicity.}
  \label{fig:ablation}
\end{figure*}

\begin{table}[!ht]
  \centering
  \setlength{\tabcolsep}{6pt}
  {\small
  \begin{tabular*}{\columnwidth}{@{\extracolsep{\fill}}p{0.40\columnwidth}ccc@{}}
    \toprule
     & Equality only
     & With monotonicity
     & Reduction \\
     & $u\mid E$ & $u\mid(E,C)$ & \\
    \midrule
    RMSE & 0.328 & 0.122 & $63\%$ \\
    Maximum absolute error & 1.067 & 0.242 & $77\%$ \\
    Mean full band width & 0.930 & 0.516 & $45\%$ \\
    Absolute error at $(1,1)$ & 1.067 & 0.043 & $96\%$ \\
    \bottomrule
  \end{tabular*}
  }
  \caption{Poisson reconstruction errors and posterior dispersion with partial boundary data. Both predictions use the same prior, PDE collocation points, and boundary observations. Band width refers to the full width of $m\pm1.96s$, averaged over the evaluation grid.}
  \label{tab:poisson2d_partialBC}
\end{table}

\paragraph{Monotonicity alone does not correctly determine the solution magnitude.}
Figure~\ref{fig:partialbc} again compares three reconstructions using the same geometry and reference solution (Figure~\ref{fig:partialbc}(b)). Recall that the PDE and boundary data alone missed the increasing nature of the field (Figure~\ref{fig:partialbc}(c)). Meanwhile, enforcing monotonicity alone captures the increasing trend but fails to recover its magnitude as illustrated in Figure~\ref{fig:partialbc}(d)). It is, indeed, necessary to include both constraints to produce the most accurate solution.
This is further illustrated by the diagonal comparison in Figure~\ref{fig:partialbc}(a). The monotonicity-only mean ranged from approximately $-1.53$ to $1.53$, whereas the true solution ranged from $0$ to $1$. Its RMSE was $0.70$, compared with $0.33$ for the equality-only reconstruction and $0.12$ for the combination. Figure~\ref{fig:ablation} also shows that the combined posterior had the smallest average band width and free-corner error. Monotonicity supplied useful shape information, but the PDE and boundary observations were needed to anchor the solution. 

Finally, Figure~\ref{fig:poisson_ablation_metrics} shows the RMSE, band width, and absolute error at the unobserved corner. Applying inequality constraints only provides the highest overall error and largest band width. However, monotonicity alone improves the solution at the (1,1) corner compared to equality constraints alone. Not surprisingly, the model with both constraints performs the best in all measures.
\begin{figure*}[!ht]
  \centering
  \includegraphics[width=0.6\textwidth]{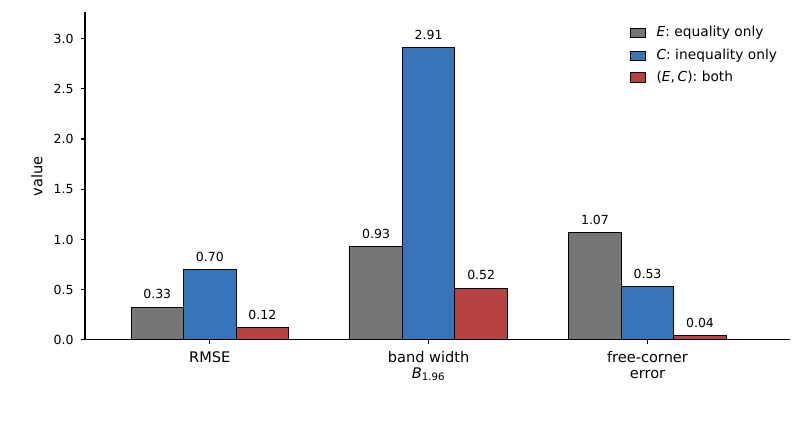}
  \caption{Poisson reconstruction with PDE and partial boundary data ($E$), monotonicity alone ($C$), and both $(E,C)$. Bars show full-grid RMSE, mean full width of the $m\pm1.96s$ bands, and absolute error at $(1,1)$.}
  \label{fig:poisson_ablation_metrics}
\end{figure*}

\paragraph{Monotonicity provides larger relative gains with noisy or sparse observations.}
We next replaced the boundary data with $n_{\mathrm{obs}}$ noisy observations at fixed Halton points on $[0,0.85]^2$, retaining the PDE collocation and virtual grids. Observation errors were independent $\mathcal N(0,\sigma_{\mathrm{obs}}^2)$ variables, and the corner $(1,1)$ remained outside the observed region.

Figure~\ref{fig:assim}(a) shows that adding monotonicity reduced mean reconstruction error more for higher observation noise. With $n_{\mathrm{obs}}=\AssimNobsFixed$, the relative RMSE reduction increased from $\AssimGainSigLo\%$ at $\sigma_{\mathrm{obs}}=\AssimSigLo$ to $\AssimGainSigHi\%$ at $\sigma_{\mathrm{obs}}=\AssimSigHi$. With exact observations, adding monotonicity reduced RMSE by only $\AssimGainSigZero\%$, and the equality posterior already assigned probability $P(C|E)=\AssimPCESigZero$ to the virtual constraint event.
Figure~\ref{fig:assim}(b) further shows that adding monotonicity constraints yields a larger improvement when the number of observations is smaller, in general.  
At fixed $\sigma_{\mathrm{obs}}=\AssimSigFixed$, the RMSE reduction was maximized at $\AssimGainNLo\%$ for $n_{\mathrm{obs}}=10$ and decreases to $\AssimGainNHi\%$ for $n_{\mathrm{obs}}=40$. At the sparsest setting of $n_{\mathrm{obs}}=6$, the reduction was slightly lower at approximately $28\%$.

At the reference setting, $n_{\mathrm{obs}}=\AssimNobsFixed$ and $\sigma_{\mathrm{obs}}=\AssimSigFixed$, adding monotonicity reduced mean RMSE from $\AssimRmseUnc$ to $\AssimRmseCon$ and average band width from $\AssimBandUnc$ to $\AssimBandCon$. Prior to adding monotonicity constraints, the model had a low probability of satisfying these constraints given by $P(C\mid E)=\AssimPCE$. Mean RMSE decreased at every tested setting, although individual realizations worsened in $\AssimNegCellsSig$ of $\AssimCellsSig$ positive-noise cases and $\AssimNegCellsN$ of $\AssimCellsN$ observation-count cases.

\begin{figure}[!ht]
  \centering
  \includegraphics[width=\textwidth]{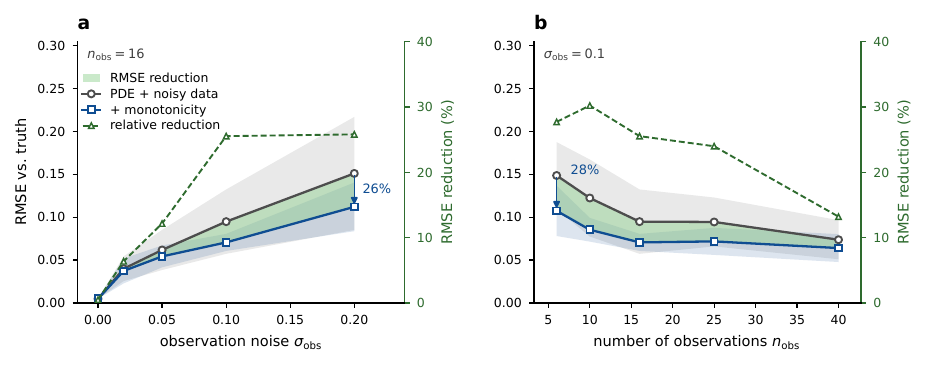}
  \caption{Effect of observation noise magnitude $\sigma_{\mathrm{obs}}$ and number of observations $n_{\mathrm{obs}}$ on the benefit of adding monotonicity constraints. (a) Noise sweep at $n_{\mathrm{obs}}=\AssimNobsFixed$. (b) Observation-count sweep at $\sigma_{\mathrm{obs}}=\AssimSigFixed$. Solid curves show full-grid RMSE against the true solution $u_\star$, averaged over \AssimSeeds{} noise realizations. Ribbons show $\pm1$ standard deviation across realizations. Green shading gives the absolute RMSE reduction. Dashed curves report the relative reduction from Eq.~\eqref{eq:gain} on the right-hand axes. Monotonicity gives larger relative improvements with noisy observations and at the lower observation counts.}
  \label{fig:assim}
\end{figure}

\subsection{Arbitrage-constrained option-price surfaces}
\newcommand{\BsSeeds}{5}
\newcommand{\BsNobsFixed}{16}
\newcommand{\BsSigFixed}{0.01}
\newcommand{\BsSigLo}{0.002}
\newcommand{\BsGainSigLo}{54}
\newcommand{\BsSigHi}{0.04}
\newcommand{\BsGainSigHi}{63}
\newcommand{\BsNLo}{10}
\newcommand{\BsGainNLo}{56}
\newcommand{\BsNHi}{60}
\newcommand{\BsGainNHi}{42}
\newcommand{\BsRmseUnc}{0.047}
\newcommand{\BsRmseCon}{0.019}
\newcommand{\BsGain}{59}
\newcommand{\BsBandUnc}{0.140}
\newcommand{\BsBandCon}{0.053}
\newcommand{\BsViolSpreadUnc}{21}
\newcommand{\BsViolCalUnc}{15}
\newcommand{\BsViolBflyUnc}{19}
\newcommand{\BsViolSpreadCon}{15}
\newcommand{\BsViolCalCon}{13}
\newcommand{\BsViolBflyCon}{17}
\newcommand{\BsPCE}{0.04}

\newcommand{\BsSpxSeeds}{20}
\newcommand{\BsSpxNoise}{0.02}
\newcommand{\BsSpxNobs}{20}
\newcommand{\BsSpxNLo}{12}
\newcommand{\BsSpxGainNLo}{23}
\newcommand{\BsSpxNHi}{300}
\newcommand{\BsSpxGainNHi}{9}
\newcommand{\BsSpxRmseUnc}{0.032}
\newcommand{\BsSpxRmseCon}{0.025}
\newcommand{\BsSpxGain}{22}
\newcommand{\BsSpxGainMed}{14}
\newcommand{\BsSpxPosFrac}{70}
\newcommand{\BsSpxGainMedNLo}{16}
\newcommand{\BsSpxPosFracNLo}{60}
\newcommand{\BsSpxNDip}{60}
\newcommand{\BsSpxGainDip}{11}
\newcommand{\BsSpxGainMedDip}{0}
\newcommand{\BsSpxPosFracDip}{50}
\newcommand{\BsSpxBandUnc}{0.128}
\newcommand{\BsSpxBandCon}{0.051}
\newcommand{\BsSpxViolCalUnc}{10}
\newcommand{\BsSpxViolCalCon}{1}
\newcommand{\BsSpxPCE}{0.01}
\newcommand{\BsSpxDenseRmseQuotes}{0.075}
\newcommand{\BsSpxDenseRmseUnc}{0.027}
\newcommand{\BsSpxDenseRmseCon}{0.027}
\newcommand{\BsSpxDenseGain}{0}
\newcommand{\BsSpxDenseNobs}{3658}

\label{sec:experiments_black_scholes}

We next tested whether price and derivative bounds improve predictions outside the observed region of an option-price surface. For a European call with strike $K$, maturity $T$, and price $C_{\mathrm{call}}(K,T)$, we inferred the normalized surface $c(x,T)=C_{\mathrm{call}}(K,T)/S_0$, where $x=\ln(K/S_0)$ and $S_0$ is the spot price. We used the price bounds, call-spread bounds, and calendar monotonicity conditions
\[
\begin{gathered}
\max(1-e^{x-rT},0)\leq c\leq1,\quad
-e^{x-rT}\leq\partial_x c\leq0,
\qquad
\partial_T c\geq0,
\end{gathered}
\]
where $r$ is the risk-free rate~\citep{fengler2009arbitrage,cousin2016kriging}. Under constant volatility $\sigma$, the equality model was Dupire's forward equation in log-moneyness coordinates~\citep{dupire1994pricing},
\[
\mathcal Lc
:=
\partial_Tc
-\frac{\sigma^2}{2}\bigl(\partial_{xx}c-\partial_xc\bigr)
+r\,\partial_xc
=0.
\]

\paragraph{Setup.}
The synthetic reference surface followed Black--Scholes with $\sigma=0.2$, $r=0.02$, and $S_0=1$, on $x\in[\ln0.7,\ln1.3]$ and $T\in[0.05,1]$ years. We used a zero-mean squared-exponential GP with lengthscales $0.45$ in $x$ and $0.6$ in $T$. PDE conditioning used a $6\times5$ interior collocation grid. Sixteen training quotes occupied the central region $|x|\leq\ln1.05$, $T\leq0.55$, while 40 held-out quotes occupied the wings $|x|>\ln1.15$ and the long-maturity region $T>0.8$ as illustrated with the true solution in Figure~\ref{fig:bs_synth}(a). Both training and held-out synthetic quotes included Gaussian noise with standard deviation $\sigma_{\mathrm{obs}}$.

Each inequality family used a uniform $3\times3$ grid of virtual points, with $T\in[0.35,0.85]$ years and $x$ spanning $[\ln0.85,\ln1.15]$ for price bounds, $[\ln0.88,\ln1.12]$ for call-spread bounds, and $[\ln0.95,\ln1.15]$ for calendar monotonicity, giving 27 constrained virtual variables, also shown in Figure~\ref{fig:bs_synth}.
We assess constraint violations using central differences of the posterior mean $m$ on the evaluation grid. The upper spread-bound, calendar, and butterfly-convexity conditions are $\partial_xm\leq0$, $\partial_Tm\geq0$, and $\partial_{xx}m-\partial_xm\geq0$, respectively. Violation rates use the numerical tolerance specified in Appendix~\ref{app:constraint-diagnostics}. Butterfly convexity was evaluated but not imposed.

\begin{figure*}[!ht]
  \centering
  \includegraphics[width=\textwidth]{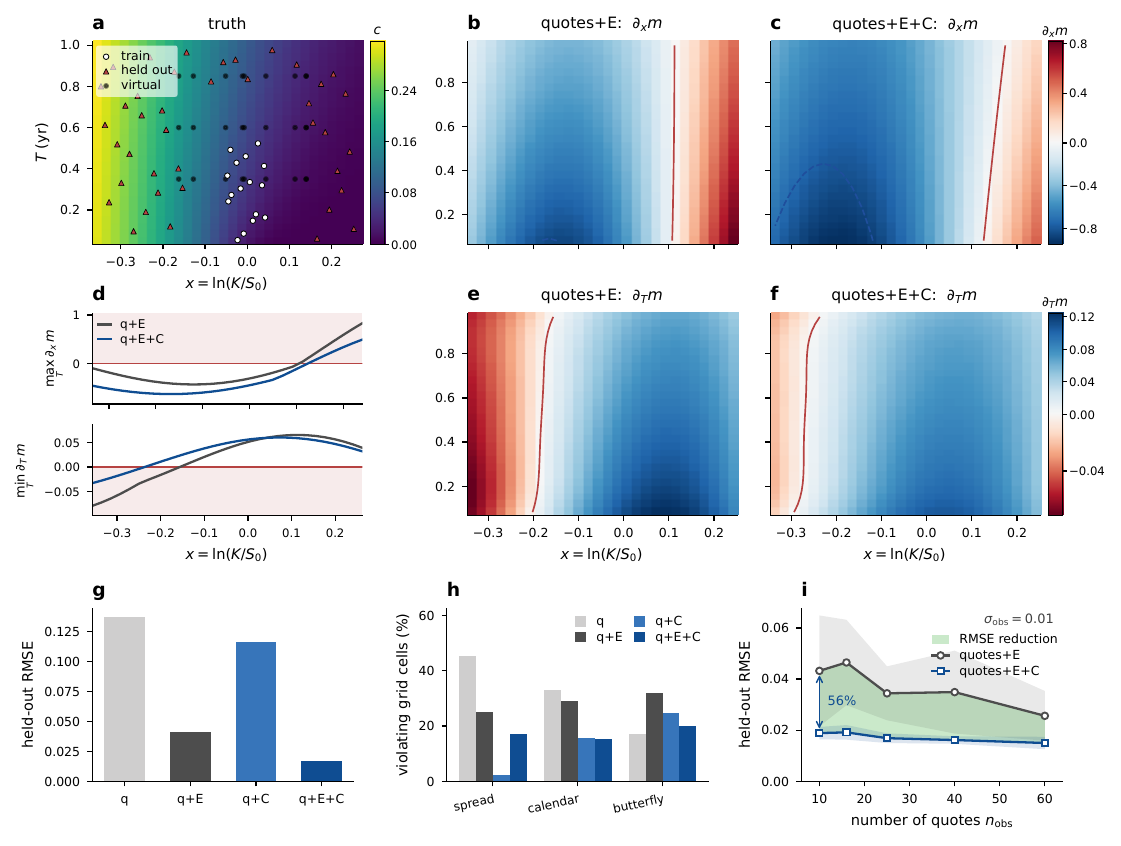}
  \caption{Price and derivative constraints improve extrapolation of a synthetic option-price surface. Derivatives in panels (b)--(f) are computed from the posterior means by central differences. (a) Black--Scholes reference surface with training quotes, held-out quotes, and virtual points. (b,c) Log-moneyness derivatives $\partial_x m$ before and after adding inequality constraints to the PDE-conditioned model. Positive values violate the upper spread bound. The dashed blue contour marks the lower bound. (d) Maximum $\partial_x m$ and minimum $\partial_T m$ over maturities, plotted against log-moneyness. Adding inequality constraints reduces both the magnitude and spatial extent of violations. Red shaded regions indicate infeasible values. (e,f) Maturity derivatives $\partial_T m$ before and after adding inequality constraints. Negative values shown in red violate calendar monotonicity. (g) Held-out RMSE for quotes alone, quotes with the PDE, quotes with inequality constraints alone, and quotes with both. (h) Constraint-violation rates of the posterior means, including butterfly convexity, which was not imposed. Panels (b)--(h) show one noise realization. (i) Mean held-out RMSE versus quote count at $\sigma_{\mathrm{obs}}=\BsSigFixed$, with $\pm1$ standard-deviation ribbons across \BsSeeds{} realizations. Green shading gives the absolute RMSE reduction; the annotation gives the relative reduction.}
  \label{fig:bs_synth}
\end{figure*}

\paragraph{Inequality constraints improve extrapolation and reduce derivative violations.}
Figures~\ref{fig:bs_synth}(b,c) show that adding inequality constraints reduced positive values of $\partial_xm$ in the right wing, where the PDE-conditioned prediction incorrectly increased with strike. Figures~\ref{fig:bs_synth}(e,f) show the corresponding reduction in negative values of $\partial_Tm$ in the left wing, where predicted prices incorrectly decreased with maturity. Figure~\ref{fig:bs_synth}(d) quantifies both changes across maturities. In the displayed realization, the largest upper spread-bound violation fell from $0.83$ to $0.49$, and the largest calendar violation fell from $0.079$ to $0.033$. The violating regions also became smaller.

Figure~\ref{fig:bs_synth}(g) shows that combining the PDE and inequality constraints yields the lowest held-out RMSE among the four predictions conditions. Across \BsSeeds{} noise realizations at $n_{\mathrm{obs}}=\BsNobsFixed$ and $\sigma_{\mathrm{obs}}=\BsSigFixed$, adding the inequality constraints reduced mean held-out RMSE from $\BsRmseUnc$ to $\BsRmseCon$ ($\BsGain\%$). The average posterior band width decreased from $\BsBandUnc$ to $\BsBandCon$, and the virtual constraint probability before the update was $P(C\mid E)=\BsPCE$.

Figure~\ref{fig:bs_synth}(h) compares violation rates for the displayed realization. Averaged over all \BsSeeds{} realizations, adding the inequality constraints reduced the upper spread-bound violation rate from $\BsViolSpreadUnc\%$ to $\BsViolSpreadCon\%$ and the calendar violation rate from $\BsViolCalUnc\%$ to $\BsViolCalCon\%$. Butterfly-convexity violations also decreased, from $\BsViolBflyUnc\%$ to $\BsViolBflyCon\%$, despite convexity not being imposed. These reductions improved the extrapolated surface, while residual violations remained on the evaluation grid.

\paragraph{Inequality constraints reduce the accuracy loss from using fewer quotes.}
Figure~\ref{fig:bs_synth}(i) shows that adding the inequality constraints reduced held-out RMSE by $\BsGainNLo\%$ with $n_{\mathrm{obs}}=\BsNLo$ quotes and by $\BsGainNHi\%$ with $n_{\mathrm{obs}}=\BsNHi$ quotes, at fixed $\sigma_{\mathrm{obs}}=\BsSigFixed$. The constrained RMSE was $0.019$ and $0.015$, respectively, compared with $0.043$ and $0.026$ for the PDE-conditioned predictions. Thus, adding the constraints substantially reduced the accuracy loss associated with fewer observations.

A separate noise sweep, not shown in Fig.~\ref{fig:bs_synth}, held $n_{\mathrm{obs}}=\BsNobsFixed$ fixed. Adding the inequalities reduced mean held-out RMSE by $\BsGainSigLo\%$ at $\sigma_{\mathrm{obs}}=\BsSigLo$ and by $\BsGainSigHi\%$ at $\sigma_{\mathrm{obs}}=\BsSigHi$. Both predictions became less accurate as noise increased, but the relative reduction was larger at the noisier endpoint.

\paragraph{The constraints also improve predictions from sparse real quotes.}
We applied the same construction to a snapshot of an S\&P~500 option chain (Appendix~\ref{app:data}). Put--call-parity regression supplied the normalization and rates, and at-the-money implied volatilities supplied a single volatility estimate. We imposed the constant-volatility PDE with $\sigma_{\mathrm{pde}}>0$ to allow departures from this approximate model.

To test sparse, noisy observations, we randomly retained subsets of the training quotes and added Gaussian noise with standard deviation $\BsSpxNoise$ in normalized price units. The held-out wing and long-maturity quotes remained fixed. Figure~\ref{fig:bs_spx}(a) shows that adding the inequality constraints reduced mean held-out RMSE across \BsSpxSeeds{} subsets regardless of the number of retained quotes. The RMSE reduction was $\BsSpxGainNHi\%$ with $\BsSpxNHi$ retained quotes and increased to $\BsSpxGainNLo\%$ with only $\BsSpxNLo$ quotes. At $n_{\mathrm{obs}}=\BsSpxNobs$, RMSE fell from $\BsSpxRmseUnc$ to $\BsSpxRmseCon$ ($\BsSpxGain\%$), with $P(C\mid E)=\BsSpxPCE$ before the update.

Figure~\ref{fig:bs_spx}(b) shows that adding the inequality constraints reduced calendar violations at every tested quote count. At $n_{\mathrm{obs}}=\BsSpxNobs$, the violation rate fell from $\BsSpxViolCalUnc\%$ to $\BsSpxViolCalCon\%$. Improvements in accuracy varied across subsets. The median RMSE reduction at $n_{\mathrm{obs}}=\BsSpxNobs$ was $\BsSpxGainMed\%$, and $\BsSpxPosFrac\%$ of subsets improved. At $n_{\mathrm{obs}}=\BsSpxNDip$, the aggregate reduction was $\BsSpxGainDip\%$, but the median was approximately zero and half the subsets improved. This is consistent with the observation that inequality constraints are more effective in reducing errors when fewer observations are available.

For the dense control case, we used all $\BsSpxDenseNobs$ training quotes with their bid--ask-derived noise level and no added perturbations. PDE conditioning reduced held-out RMSE from $\BsSpxDenseRmseQuotes$ to $\BsSpxDenseRmseUnc$, and adding the inequalities left it at $\BsSpxDenseRmseCon$. This is shown by the gray box mark in Figure~\ref{fig:bs_spx}(a). The constraints therefore added little to the densely observed surface, while improving predictions in the sparse, noisy reconstructions.

\begin{figure}[!ht]
  \centering
  \includegraphics[width=0.90\columnwidth]{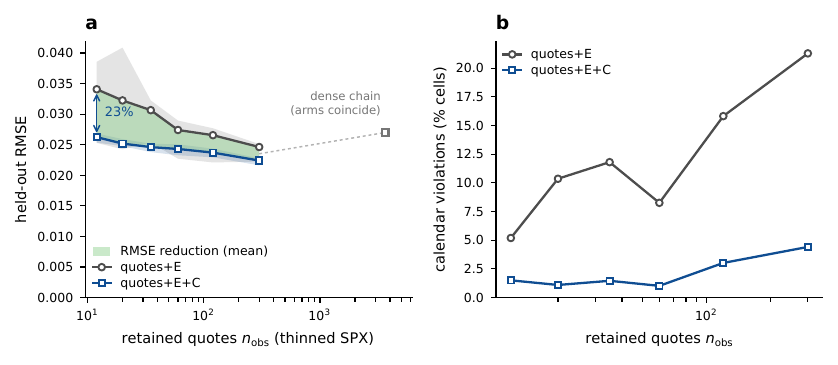}
  \caption{Effects of inequality constraints on an S\&P~500 option chain with experimentally thinned and perturbed training quotes. (a) Mean held-out RMSE across \BsSpxSeeds{} subsets at each retained-quote count. Gray shading shows the $25$th--$75$th percentiles. Adding the inequality constraints reduces mean error throughout the thinning experiment. The separate dense control (square gray point) uses all $\BsSpxDenseNobs$ training quotes at their bid--ask-derived noise level, where the two RMSE values coincide. (b) Calendar-violation rates of the posterior mean surfaces, averaged over the same subsets, decrease after adding the inequality constraints at every tested quote count.}
  \label{fig:bs_spx}
\end{figure}

\subsection{Nonnegative reconstruction of an air-quality field}
\label{subsec:air_quality}
\newcommand{\AqSeeds}{5}
\newcommand{\AqNobsFixed}{14}
\newcommand{\AqSigFixed}{0.1}
\newcommand{\AqSigLo}{0.05}
\newcommand{\AqGainSigLo}{7}
\newcommand{\AqSigHi}{0.3}
\newcommand{\AqGainSigHi}{-43}
\newcommand{\AqTailRedSigLo}{23}
\newcommand{\AqTailRedSigHi}{56}
\newcommand{\AqNLo}{8}
\newcommand{\AqGainNLo}{-19}
\newcommand{\AqNHi}{28}
\newcommand{\AqGainNHi}{32}
\newcommand{\AqTailRedNLo}{41}
\newcommand{\AqTailRedNHi}{28}
\newcommand{\AqRmseE}{0.076}
\newcommand{\AqRmseEC}{0.088}
\newcommand{\AqGain}{-16}
\newcommand{\AqBandE}{0.35}
\newcommand{\AqBandEC}{0.30}
\newcommand{\AqSubzeroE}{0.160}
\newcommand{\AqSubzeroEC}{0.107}
\newcommand{\AqTailRed}{33}
\newcommand{\AqNegFrac}{13}
\newcommand{\AqMinVal}{-0.26}
\newcommand{\AqPCE}{0.09}

\newcommand{\AqRealSnaps}{4}
\newcommand{\AqRealSnapNoun}{snapshots}
\newcommand{\AqRealSnapsFull}{2}
\newcommand{\AqRealFolds}{5}
\newcommand{\AqRealCells}{90}
\newcommand{\AqRealMonDense}{22}
\newcommand{\AqRealMonSparse}{6}
\newcommand{\AqRealTailRedDense}{47}
\newcommand{\AqRealTailRedSparse}{74}
\newcommand{\AqRealTailRedDenseErr}{6}
\newcommand{\AqRealTailRedSparseErr}{3}
\newcommand{\AqRealClipDense}{1}
\newcommand{\AqRealClipSparse}{7}
\newcommand{\AqRealClipMax}{7}
\newcommand{\AqRealHeldGainDense}{7}
\newcommand{\AqRealHeldGainSparse}{3}
\newcommand{\AqRealPCEDense}{0.78}
\newcommand{\AqRealPCESparse}{0.47}
\newcommand{\AqRealMonCommon}{10}
\newcommand{\AqRealTailCommon}{67}
\newcommand{\AqRealSnapsMono}{4}
\newcommand{\AqRealCrpsGainSparse}{4}
\newcommand{\AqRealCrpsGainDense}{7}
\newcommand{\AqRealCrpsGainAll}{3}
\newcommand{\AqRealNllGainAll}{-1}
\newcommand{\AqRealIscGainAll}{3}
\newcommand{\AqRealCovE}{0.62}
\newcommand{\AqRealCovEC}{0.60}
\newcommand{\AqRealCovClip}{0.62}

In the final example, we tested nonnegativity constraints on a pollutant concentration field $c(x,y)$ governed by advection and diffusion. Unlike the derivative constraints above, this experiment constrains the field values directly.

\paragraph{Setup and feasibility diagnostic.}
The synthetic field satisfies the steady advection--diffusion equation
\[
Lc=-D\Delta c+\mathbf v\cdot\nabla c+kc=S
\]
on $\mathcal D=(0,1)^2$, with constant diffusivity $D$, wind velocity $\mathbf v$, decay rate $k$, and a nonnegative Gaussian emission source $S$. A second-order finite-difference solution supplied the reference plume. We conditioned a zero-mean GP on the PDE residual at interior collocation points and noisy monitor readings, then added $c\geq0$ constraints on a $3\times3$ virtual grid. Figure~\ref{fig:air_quality}(a) shows the reference field and the monitor, held-out, and virtual locations. Synthetic held-out RMSE was evaluated against the noise-free reference concentrations.
\begin{figure}[!ht]
  \centering
  \includegraphics[width=\columnwidth]{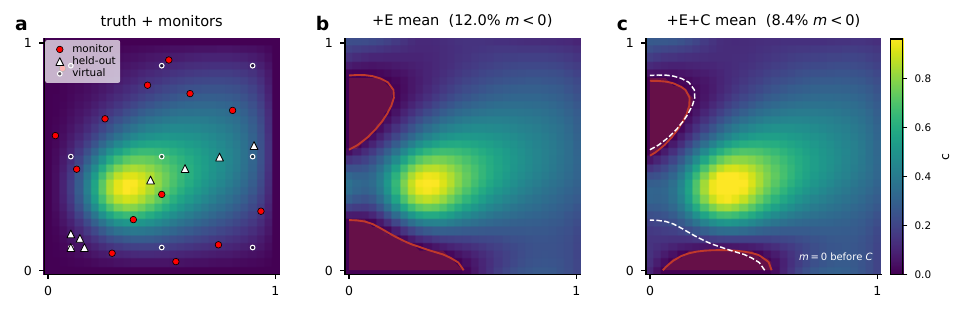}
  \caption{Nonnegativity reduces negative posterior means in a synthetic plume reconstruction. Results use $n_{\mathrm{obs}}=\AqNobsFixed$, $\sigma_{\mathrm{obs}}=\AqSigFixed$, and one noise realization. (a) Finite-difference reference field with monitor, held-out, and virtual locations. (b) Posterior mean conditioned on monitors and the PDE, with negative values overlaid in red. (c) Adding nonnegativity reduces the negative region. The dashed white contour marks its boundary before the update. Table~\ref{tab:air_quality_arms} reports metrics for this realization.}
  \label{fig:air_quality}
\end{figure}

In addition to the fraction of negative posterior means, we calculate the grid-averaged marginal probability of a negative concentration as
\begin{equation}
\label{eq:subzero-proxy}
Q_0
=
\frac{1}{|\mathcal G|}
\sum_{\bm x\in\mathcal G}
\Phi_{\mathrm{std}}\!\left(-\frac{m(\bm x)}{s(\bm x)}\right),
\end{equation}
where $\Phi_{\mathrm{std}}$ is the standard normal distribution function. For the non-Gaussian constrained posterior, it is a Gaussian moment-based sub-zero-tail proxy.

\paragraph{The nonnegativity constraint reduces negative predictions.}
Figure~\ref{fig:air_quality}(b) shows that conditioning on the PDE and sparse monitors produced negative posterior means in the clean-air region. Figure~\ref{fig:air_quality}(c) shows that adding nonnegativity reduced this region of constraint violation. At $n_{\mathrm{obs}}=\AqNobsFixed$ and $\sigma_{\mathrm{obs}}=\AqSigFixed$, the fraction of negative grid values fell from $12.0\%$ to $8.4\%$ in the displayed realization. The constrained mean was nonnegative at every virtual point, with residual negative values elsewhere on the grid implying that a denser grid of constraint points will likely continue to improve the solution.

Table~\ref{tab:air_quality_arms} compares the four prediction cases for this realization. Adding nonnegativity to the PDE-conditioned model reduced held-out RMSE from $0.142$ to $0.075$ and average band width from $0.355$ to $0.305$. Monitors with nonnegativity constraints but without the PDE constraints gave the largest RMSE of $0.314$, compared with $0.086$ for monitors alone, suggesting that applying only the non-negativity constraint can be detrimental to the solution.

Across \AqSeeds{} noise realizations at the same setting, adding nonnegativity constraints reduced $Q_0$ from $\AqSubzeroE$ to $\AqSubzeroEC$ ($\AqTailRed\%$) and narrowed the average band width from $\AqBandE$ to $\AqBandEC$. Mean held-out RMSE nevertheless increased from $\AqRmseE$ to $\AqRmseEC$. The accuracy improvement in the illustrated realization therefore did not persist after averaging over observation noise. The virtual constraint probability before the update was $P(C\mid E)=\AqPCE$.

\begin{table}[!ht]
  \centering
  \setlength{\tabcolsep}{6pt}
  {\small
  \begin{tabular*}{\columnwidth}{@{\extracolsep{\fill}}p{0.50\columnwidth}ccc@{}}
    \toprule
    Conditioning
    & \shortstack{Held-out RMSE}
    & \shortstack{Fraction $m<0$}
    & \shortstack{Band width} \\
    \midrule
    Monitors
      & 0.086 & 0.102 & 1.602 \\
    Monitors + PDE
      & 0.142 & 0.120 & 0.355 \\
    Monitors + nonnegativity
      & 0.314 & \textbf{0.078} & 1.282 \\
    Monitors + PDE + nonnegativity
      & \textbf{0.075} & 0.084 & \textbf{0.305} \\
    \bottomrule
  \end{tabular*}
  }
  \caption{Synthetic air-quality reconstruction for the noise realization shown in Figure~\ref{fig:air_quality}. Negative-mean fractions and full $m\pm1.96s$ band widths are evaluated over the grid. Bold values mark the minimum in each column.}
  \label{tab:air_quality_arms}
\end{table}

\paragraph{Tail-proxy reduction improves with greater noise but at the expense of mean accuracy.}
Figure~\ref{fig:air_quality_sweep} shows tail-proxy reduction and held-out RMSE separately at fixed $n_{\mathrm{obs}}=\AqNobsFixed$ for increasing noise. Increasing $\sigma_{\mathrm{obs}}$ from $\AqSigLo$ to $\AqSigHi$ improved the $Q_0$ reduction from $\AqTailRedSigLo\%$ to $\AqTailRedSigHi\%$. Over the same range, the held-out RMSE reduction changed from $\AqGainSigLo\%$ to $\AqGainSigHi\%$. The negative reduction indicates that the constraint increased error for larger noise values.
A separate monitor-count sweep, not shown in Figure~\ref{fig:air_quality_sweep}, held $\sigma_{\mathrm{obs}}=\AqSigFixed$ fixed. Reducing the monitor count from $\AqNHi$ to $\AqNLo$ increased the $Q_0$ reduction from $\AqTailRedNHi\%$ to $\AqTailRedNLo\%$, while the RMSE reduction decreased from $\AqGainNHi\%$ to $\AqGainNLo\%$. Thus, stronger reductions in the sub-zero-tail proxy coexisted with poorer mean predictions. With this zero-mean prior, the upward shift produced by the nonnegativity constraint can overshoot concentrations near zero in clean-air regions.

\begin{figure}[!ht]
  \centering
  \includegraphics[width=0.5\columnwidth]{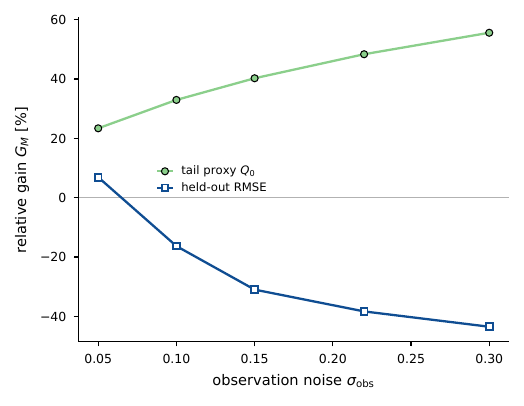}
  \caption{Observation noise separates improvements in the sub-zero tail diagnostic from improvements in mean accuracy. Curves show percentage gains in the sub-zero-tail proxy $Q_0$ and held-out RMSE after adding the nonnegativity constraint, for fixed $n_{\mathrm{obs}}=\AqNobsFixed$. Gains compare means over \AqSeeds{} noise realizations using Eq.~\eqref{eq:gain}. The gains in $Q_0$ grow with noise, whereas the RMSE reduction becomes negative, indicating reduced accuracy. The horizontal line marks no change relative to the PDE-conditioned prediction.}
  \label{fig:air_quality_sweep}
\end{figure}

\paragraph{Real monitor networks show lower tail proxies and mixed accuracy changes.}
We repeated the analysis on \AqRealSnaps{} OpenAQ daytime snapshots of $\mathrm{PM}_{2.5}$ and $\mathrm{NO}_2$ from Maryland and New York (Appendix~\ref{app:data}). Each snapshot used its own ERA5-derived wind direction and a source-free transport model with $S=0$. We evaluated \AqRealFolds{} seeded train--test splits per snapshot and sampled monitor count, giving \AqRealCells{} evaluations. Here, monitor count denotes the sampled network before stations were withheld for scoring.

The dark line in Figure~\ref{fig:air_quality_real}(a) shows that adding nonnegativity resulted in larger $Q_0$ gaints as the networks were thinned by sampling fewer monitors. Across all \AqRealSnaps{} snapshots, the reduction increased from $\AqRealTailCommon\%$ at \AqRealMonCommon{} monitors to $(\AqRealTailRedSparse\pm\AqRealTailRedSparseErr)\%$ at \AqRealMonSparse{} monitors, where the uncertainty is a bootstrap standard error. Over the same range, mean $P(C\mid E)$ decreased from $0.55$ to $0.47$.
The lighter curve in Figure~\ref{fig:air_quality_real}(a) follows the two Maryland $\mathrm{PM}_{2.5}$ snapshots across the full monitor-count range. In general the $Q_0$ gains increase as the number of sampled monitors decreases, except at $N_{\mathrm{mon}}=14$, which is likely a result of sample noise.
For comparison, we set negative values of the equality-posterior mean to zero and left the variance unchanged. The dashed curve shows that this correction reduced $Q_0$ much less than adding the nonnegativity constraint.

Figure~\ref{fig:air_quality_real}(b) shows that changes in held-out RMSE were small and could be either positive or negative. Across all four snapshots, thinning from ten to six monitors changed the RMSE reduction from approximately $-1.5\%$ to $2.6\%$. Adding nonnegativity therefore consistently reduced the moment-based tail diagnostic, while its effect on held-out mean accuracy depended on the sampled network.

\begin{figure}[!ht]
  \centering
  \includegraphics[width=\columnwidth]{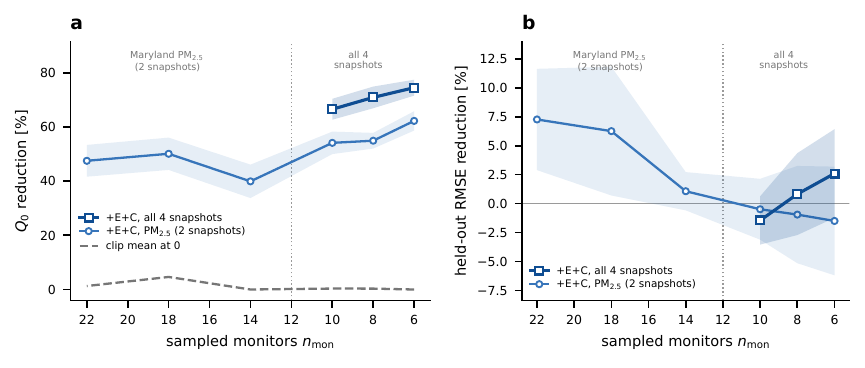}
  \caption{Effects of nonnegativity constraints on real OpenAQ reconstructions. (a) Gains in the Gaussian moment-based sub-zero-tail proxy $Q_0$. The dashed baseline clips the equality-posterior mean at zero and retains its variance, using the two snapshots that span the full monitor-count range. (b) Held-out RMSE reduction, which can be positive or negative. The dark blue curves pool all \AqRealSnaps{} snapshots over their common monitor counts. Light blue curves pool the two Maryland $\mathrm{PM}_{2.5}$ snapshots over the full range. Monitor counts decrease from left to right and include stations subsequently held out for scoring. Gains compare pooled means. Shading denotes bootstrap standard errors from 2,000 resamples of snapshot--split pairs at each count.}
  \label{fig:air_quality_real}
\end{figure}

\section{Discussion and Limitations}

The proposed general framework for GP solutions to linear PDEs and the test cases examined demonstrate how PDE solution constraints can be combined with inequality information to improve accuracy, ensure solution validity, and reduce uncertainty. Nonetheless, these results show that gains are not always consistent and there are areas for improvement. The following presents a brief summary of key takeaways from this work and avenues for future research.

\paragraph{Inequality information can complement equality constraints, particularly when observations are limited and/or noisy.}
The major takeaway from this work is that equality and inequality constraints can be combined in a unified framework and that those components can work together to improve probabilistic solutions over those that apply only one constraint or the other. This is especially true when observations are sparse and/or noisy as demonstrated in each of the examples. 
The Poisson and option-price experiments show that shape constraints can improve predictions over those that have only PDE constraints. In the Poisson ablation, satisfying only the PDE and limited, potentially noisy boundary constraints was insufficient to accurately identify the solution. Meanwhile, applying monotonicity constraints alone ensured the solution maintained an increasing trend but did not appropriately scale the solution magnitude. Combining monotonicity with the PDE and boundary observations reduced reconstruction error and prediction uncertainty in the presence of limited and/or noisy data. The synthetic option-price comparison likewise favored combining the PDE and inequalities over using either with the limited quotes alone. By contrast, the noise-free Poisson case and densely quoted option surface showed that inequality constraints provide little additional benefit when observations are abundant and noise free. These comparisons support treating inequality constraints as an additional source of information whose value depends on the number and quality of observations.

\paragraph{Accuracy, constraint violation, and dispersion diagnostics are distinct outcomes that are not uniformly improved by combining constraints.}
We found that, while combining equality and inequality constraints generally tends to improve solution diagnostics, there may be disagreement between different diagnostics. For example, in the reported air-quality averages, nonnegativity reduced the sub-zero-tail proxy $Q_0$ while sometimes increasing held-out RMSE. With a prior centered at zero, upward shifts can artificially inflate concentrations in clean-air regions where values near zero are appropriate. The synthetic option-price sweeps also contained settings where held-out RMSE decreased but derivative-violation rates increased (Appendix~\ref{app:constraint-diagnostics}). Accuracy and constraint violations diagnostics therefore require separate evaluation, and improving one cannot be expected to improve the other. Posterior band width provides yet another diagnostic. A narrower band indicates lower solution uncertainty but should not be mistaken as a measure of solution accuracy.

\paragraph{Evaluating $P(C|E)$ can help determine when inequality constraints may be needed.}
The probability $P(C\mid E)$ describes how often the solution satisfies the specified inequality constraints under the equality posterior. It is a Gaussian box probability that can be evaluated before the inequality update \citep{botev2017normal}. Its value depends on the equality posterior, constraint operators, bounds, virtual locations, and virtual noise. Within a fixed design, a value near one means that the constraint event is already nearly guaranteed to be satisfied, whereas a small value implies that inequality constraints may be needed because the equality-alone solution has a high likelihood of violating the constraints. This may arise from limited observations, noisy observations, proximity to a constraint boundary, or model misspecification. Importantly, the probability $P(C|E)$ alone neither distinguishes these explanations nor predicts the resulting accuracy gained by adding inequality constraints. It is therefore a diagnostic of the proposed constraint event, with predictive consequences assessed through the reconstruction and held-out comparisons. In several cases studied here, $P(C|E)$ was shown to be low and therefore motivated supplementing the solution with inequality constraints.

\paragraph{The construction combines existing constrained-GP capabilities.}
The method combines operator-level Gaussian conditioning \citep{pfortner2022physics} with conditioning on noisy virtual inequalities \citep{agrell2019gaussianprocesseslinearoperator}. Proposition~\ref{prop:sequential-conditioning} establishes that the equality posterior is a valid prior for the second update, and we prove that sequential application of equality then inequality constraints is probabilistically equivalent to joint constraint application. This approach complements methods that encode linear equalities in the prior \citep{jidling2017linearly,lange2021linearly} or represent inequality-constrained fields through finite bases \citep{maatouk2017gaussian}. Here, the chosen GP prior and the Gaussian equality update are retained, while the inequality event updates both posterior means and covariances.

\paragraph{The constraints are finite and soft.}
The construction conditions a finite vector of noisy operator evaluations. Positive virtual noise makes the restrictions soft even at the selected locations, and feasibility elsewhere is assessed on the evaluation grid. Virtual-point placement and noise are therefore part of the model specification. The equivalence of sequential and joint conditioning is exact, whereas the truncated-Gaussian moments are evaluated numerically. Their computation becomes expensive as the virtual dimension increases, motivating more efficient integration and constraint-selection methods \citep{bachoc2022sequential}. The implementation used only identity and first-order derivative constraints. Extending it to second-order operators is straightforward and would allow, for example, butterfly convexity to be imposed directly.


\section{Conclusion}

We established a posterior formulation that combines linear PDE observations with linear inequality information. Sequential Gaussian equality conditioning followed by conditioning on a finite virtual constraint event yields the same law as joint conditioning. The resulting solver retains operator cross-covariances and computes the constrained posterior moments through truncated-Gaussian calculations.

The experiments showed that monotonicity, option-price bounds, and nonnegativity can contribute information beyond the PDE and limited noisy observations. Their effects on reconstruction error, constraint diagnostics, and posterior dispersion differed across settings. The probability $P(C\mid E)$ measures agreement with the specified virtual constraints before the update, while predictive benefits are evaluated against reference solutions or held-out data. With soft constraints at a finite number of virtual locations, the construction incorporates physical and economic admissibility directly into probabilistic PDE inference.

\section*{Acknowledgments}
This material is based upon work supported by the National Science Foundation under Grant Nos. 2331295 and 2623467.

\appendix

\section{Measure-Theoretic Background}
\label{app:measure}

The proof of Proposition~\ref{prop:sequential-conditioning} uses three standard notions, collected here in the form in which the proof uses them. Nothing in this appendix is new.

\paragraph{Regular conditional distributions.}
A regular conditional distribution of $u$ given $Y$ is a map $y\mapsto\mu_y$ such that $\mu_y$ is a Borel probability measure on $U$ for every $y$, such that $y\mapsto\mu_y(A)$ is measurable for every $A\in\mathcal B(U)$, and such that $\mathbb P(u\in A,\,Y\in B)=\int_B\mu_y(A)\,\mathcal L(Y)(dy)$ for all $A\in\mathcal B(U)$ and $B\in\mathcal B(\mathbb R^{n_{\mathrm{eq}}})$. Such a map exists here. Any two such maps can differ only on an $\mathcal L(Y)$-null set. At an atom $y$ with $\mathbb P(Y=y)>0$, the defining identity gives $\mu_y(A)=\mathbb P(u\in A,\,Y=y)/\mathbb P(Y=y)$ for every $A\in\mathcal B(U)$, so the conditional law is uniquely determined there. Step~1 fixes one explicit Gaussian version for every $y$, including values with $\mathbb P(Y=y)=0$, so Proposition~\ref{prop:sequential-conditioning} has a definite meaning for any prescribed $y$.

\paragraph{Image measures.}
For a measurable $T:E\to F$ and a probability measure $\mu$ on $E$, the image measure $T_{\#}\mu$ is defined by $(T_{\#}\mu)(A):=\mu\bigl(T^{-1}(A)\bigr)$ for $A\in\mathcal B(F)$; equivalently, $X\sim\mu$ implies $T(X)\sim T_{\#}\mu$. The only consequence used in steps~2 and~3 is that equal laws have equal images: $\mathcal L(X_1)=\mathcal L(X_2)$ implies $\mathcal L\bigl(T(X_1)\bigr)=\mathcal L\bigl(T(X_2)\bigr)$.

\paragraph{Cylinder sets and the $\pi$--$\lambda$ theorem.}
For $q\in\mathbb N$, functionals $\ell_1,\ldots,\ell_q\in U'$ and $B\in\mathcal B(\mathbb R^q)$, the associated cylinder set is
\begin{equation}
C_{\ell,B}:=\bigl\{v\in U:\ (\ell_1(v),\ldots,\ell_q(v))\in B\bigr\},
\end{equation}
and $\mathcal C_U$ denotes the collection of all such sets, taken over every $q$, every finite family of functionals and every $B$. Each records finitely many bounded linear measurements of a path. Intersecting two cylinder sets stacks their functionals and intersects the corresponding finite-dimensional conditions, so the cylinder sets are closed under finite intersections, that is, they form a $\pi$-system. A $\lambda$-system is a collection of subsets that contains $U$ and is closed under complements and countable disjoint unions; for any two probability measures on $U$, the class of Borel sets on which they agree is one. Dynkin's $\pi$--$\lambda$ theorem states that a $\lambda$-system containing a $\pi$-system contains the $\sigma$-algebra that the $\pi$-system generates. When $U$ is separable, the Hahn--Banach theorem supplies a countable family in $U'$ that norms $U$. Closed balls are then countable intersections of cylinder sets, every open set is a countable union of balls, and so $\sigma(\mathcal C_U)=\mathcal B(U)$. Step~5 combines these three facts.

\paragraph{A singular equality Gram matrix.}
The construction of step~1 uses the pseudoinverse $G^{\dagger}$, and $G$ need not be invertible because $\Sigma_{\mathrm{eq}}$ is only positive semi-definite. The vanishing of the cross-covariance is then not immediate. Write $K_{uY}c:=\sum_j c_j r_j$ for the cross-covariance of $u$ with $Y$, so that $\Phi=K_{uY}G^{\dagger}$ and $\operatorname{Cov}(u-\Phi Y,Y)=K_{uY}(I-G^{\dagger}G)$, where $I-G^{\dagger}G$ is the orthogonal projection onto $\ker G$. That residual vanishes: for $c\in\ker G$ we have $\operatorname{Var}(c^{\top}Y)=c^{\top}Gc=0$, so $c^{\top}Y$ is almost surely constant and $K_{uY}c=\operatorname{Cov}(u,c^{\top}Y)=0$. Hence $K_{uY}=K_{uY}G^{\dagger}G$. The cross-covariance in step~1 therefore vanishes whether or not $G$ has full rank.

\section{Implementation Details}
\label{app:implementation}

This appendix records the implementation choices omitted from the main text for brevity. Each one is needed to reproduce the reported numbers. We group them by operator assembly, constraint scope, the inequality update, the reference solution, and posterior summaries.

\paragraph{Differential operator assembly.}
The air-quality generator $Lc=-D\,\Delta c+\mathbf v\cdot\nabla c+k\,c$ combines second-, first-, and zeroth-order terms. We assemble it as a weighted sum of three named \texttt{linpde-gp} operators: a \texttt{WeightedLaplacian}, a \texttt{DirectionalDerivative}, and an identity term. For the squared-exponential kernel (\texttt{ExpQuad}), each named operator carries a closed-form cross-covariance. The library therefore evaluates $L_{\bm{x}}L_{\bm{x}'}k_0(\bm{x},\bm{x}')$ in closed form, where the two operators act on the first and second kernel arguments, respectively. We checked this chain against a nested automatic-differentiation reference, and the two agreed to within $10^{-6}$. The assembly route matters in practice, because a single merged differential operator falls back to bare first-order partials, which \texttt{ExpQuad} does not dispatch in closed form.

\paragraph{Constraint scope and the inequality wrapper.}
The option-surface experiment applies price bounds, two-sided call-spread bounds, and calendar monotonicity through the \texttt{gp\_constr} component of the wrapper. Each family is evaluated on a $3\times3$ virtual grid, giving $d=27$ scalar virtual variables in the inequality update. The wrapper supports identity and first-order derivative constraints. Butterfly convexity requires second-order derivatives and is evaluated only as a finite-difference diagnostic.

\paragraph{Grid-based option-price diagnostics.}
\label{app:constraint-diagnostics}
We estimated derivatives of the posterior mean $m$ by central differences on the evaluation grid. The upper spread-bound, calendar, and butterfly violation rates were the fractions of valid finite-difference locations satisfying, respectively,
\[
\partial_x m>\tau,
\qquad
\partial_T m<-\tau,
\qquad
\partial_{xx}m-\partial_xm<-\tau,
\]
with $\tau=10^{-4}$. The spread diagnostic counts violations of the upper derivative bound only.

The quote-count sweep included settings where lower prediction error coexisted with higher violation rates. At $n_{\mathrm{obs}}=\BsNLo$ and $\sigma_{\mathrm{obs}}=\BsSigFixed$, adding the inequalities reduced mean held-out RMSE by $\BsGainNLo\%$. Over the same \BsSeeds{} noise realizations, the mean calendar-violation rate increased from $6.0\%$ to $12.9\%$, and the mean butterfly-violation rate increased from $9.9\%$ to $16.7\%$. These grid-based diagnostics measure changes beyond the finite virtual constraint event.

\paragraph{Inequality update: virtual noise and mean handling.}
All experiments use virtual-observation noise $\Sigma_v=\sigma_v^2 I$ with $\sigma_v^2=10^{-4}$. When several derivative blocks share virtual locations, $\Sigma_Z$ can become numerically singular; $\sigma_v^2$ is then raised geometrically until the factorization succeeds, which only enlarges the virtual noise (a final eigenvalue floor guards the pathological case). The inequality update acts on the centered process $u-m_{\mathrm{phys}}$, with the bounds shifted by $L_{\mathrm{ineq}}m_{\mathrm{phys}}$; the shift is exact pathwise by linearity. It is evaluated by second-order central differences with step $10^{-4}$, a bound perturbation of order $10^{-8}$, negligible against $\sigma_v^2$. All covariance terms use closed-form operator cross-covariances.

\paragraph{Reference solution.}
The synthetic plume in the air-quality experiment comes from a second-order finite-difference solve of $Lc=S$. A manufactured-solution test gives an observed convergence rate of approximately $2$, which confirms the discretization order.

\paragraph{Posterior summaries.}
Pointwise posterior bands span the posterior mean plus or minus $1.96$ marginal posterior standard deviations. For the Gaussian equality posterior, this is the exact marginal $95\%$ credible interval. The inequality-constrained posterior is non-Gaussian, so its corresponding band is a Gaussian-equivalent dispersion summary rather than a $95\%$ credible interval. We compute the constrained standard deviation from the truncated-Gaussian covariance rather than estimating it from posterior samples. The constrained mean and covariance follow from exact compound-moment identities \citep[Corollary~6]{agrell2019gaussianprocesseslinearoperator}; only the truncated-Gaussian moments $\nu$ and $\Gamma$ that enter them lack a closed form, and we evaluate these by seeded randomized quasi-Monte Carlo~\citep{genz2009computation}. The seed removes run-to-run variability from the reported summaries, but numerical-integration error remains.

\paragraph{Computational cost.}
The two-stage solve has two principal costs. The equality update is closed-form Gaussian conditioning, at the usual cubic cost in the number of equality observations. The inequality update evaluates the truncated-Gaussian moments $\nu$ and $\Gamma$, the mean and covariance of the $d$ virtual variables, and this evaluation dominates the constrained solve. Its cost is governed by the constraint dimension $d$, not by the number of prediction points. The covariance $\Gamma$ alone has $d(d+1)/2$ entries, so the cost grows at least quadratically in $d$. The three experiments used $d=32$ for the Poisson problem, $d=27$ for the option surface, and $d=9$ for the air-quality field. These arise from two $16$-point derivative blocks, three $3\times3$ blocks, and one $3\times3$ identity block, respectively. The dimensions are modest; the same computation scales poorly as the virtual set grows, the recognized bottleneck of this family \citep{bachoc2022sequential}.

\section{Data and Preprocessing}
\label{app:data}

The two real-data experiments draw on public sources. The S\&P~500 option chain is a single-day snapshot of the \texttt{\textasciicircum SPX} listing from Yahoo Finance.\footnote{\url{https://finance.yahoo.com}, retrieved with the \texttt{yfinance} client.} The pollutant fields are measurements from the OpenAQ~v3 REST API,\footnote{OpenAQ, \url{https://openaq.org}; v3 REST API documented at \url{https://docs.openaq.org}, access requires a free API key.} and the transport winds are sampled hourly at 10~m from the Open-Meteo historical archive,\footnote{\url{https://open-meteo.com}.} which serves ECMWF ERA5 reanalysis. The snapshot files and every fetch and preprocessing script are released in the accompanying code repository (see the Code link in the main text). The subsections below record the provenance of each source and the preprocessing that turns it into the quantities the experiments condition on.

\subsection{S\&P~500 option-price surface}
\label{app:data_spx}

\paragraph{Snapshot.}
The chain was captured on 2026-06-13 and holds $18{,}352$ raw quotes, $8{,}142$ calls and $10{,}210$ puts, across $49$ listed expiries reaching to December~2031. 
Both option types are retained at this stage, because the forward and the discount rate are recovered from call and put pairs.

\paragraph{Quote filtering.}
A quote is dropped when its bid is nonpositive or its relative bid--ask spread $(\mathrm{ask}-\mathrm{bid})/\mathrm{mid}$ exceeds $25\%$, and maturities are restricted to $T\in[0.05,1]$~years. Filtering leaves $11{,}382$ quotes over $30$ expiries, spanning roughly $19$ to $342$ calendar days. Each surviving quote enters at its mid price.

\paragraph{Forward, rate, and spot.}
For each expiry $T$, we fit the put--call-parity regression $C_{\mathrm{call}}(K,T)-P_{\mathrm{put}}(K,T)=a_T+b_TK$ by least squares on strikes quoted for both option types, requiring at least three such strikes. Here, $P_{\mathrm{put}}(K,T)$ denotes the put price. The slope satisfies $b_T=-e^{-r_TT}$, giving the expiry-specific rate $r_T=-\ln(-b_T)/T$, and the intercept gives the forward $F_T=-a_T/b_T$. The across-expiry medians define the spot $S_0=\operatorname{median}_T\!\left(F_Te^{-r_TT}\right)\approx7.4\times10^{3}$
index points and the single discount rate $r=\operatorname{median}_T r_T\approx7\%$
on this snapshot. Prices and strikes are then normalized to $c=C_{\mathrm{call}}/S_0$ and $x=\ln(K/S_0)$.

\paragraph{Volatility and quote noise.}
The constant-coefficient Dupire operator needs one volatility. We set it to the average of the at-the-money implied volatilities, taken over quotes with log-moneyness $|x|<0.03$ and each recovered by Brent inversion of the normalized Black--Scholes call price. The observation noise on quote $i$ is half its bid--ask spread in normalized units, $\sigma_i=0.5(\mathrm{ask}-\mathrm{bid})/S_0$. The model uses a single observation-noise level, the median $\sigma_i$, about $2\times10^{-4}$ on this chain.

\paragraph{Evaluation split.}
The surface is fit and scored on the retained calls whose log-moneyness lies in $[\ln 0.7,\ln 1.3]$, $4{,}629$ quotes in all. 
The held-out set is the extrapolation block, log-moneyness $|x|>\ln 1.15$ or maturity $T>0.8$~years, which holds $971$ quotes; the remaining $\BsSpxDenseNobs$ bulk quotes form the training pool. This split is fixed for every arm and every subsample, so the score target never moves.

\paragraph{Dense control and thinning.}
The dense control used all $\BsSpxDenseNobs$ bulk quotes at their original mid prices. Its observation-noise level was the median normalized bid--ask half-spread, denoted by $\sigma_{\mathrm{chain}}$. To test sparse, noisy observations, we randomly retained
$n_{\mathrm{obs}}\in\{12,20,35,60,120,300\}$
quotes from this pool, using $\BsSpxSeeds$ independently seeded subsets per count. For each subset, we set
\[
\sigma_{\mathrm{obs}}
=
\max\{\sigma_{\mathrm{chain}},\BsSpxNoise\}
=
\BsSpxNoise.
\]
We added independent $\mathcal N(0,\sigma_{\mathrm{obs}}^2)$ noise to the retained training mid prices and used the same $\sigma_{\mathrm{obs}}$ in the observation likelihood. All predictions were scored against the same $971$ held-out market mid prices, without added noise. The dense control retained the original training mid prices and the chain-derived noise level.

\subsection{OpenAQ ground-monitor networks}
\label{app:data_openaq}

\paragraph{Source and aggregation.}
Each snapshot is one pollutant, either $\mathrm{PM}_{2.5}$ or $\mathrm{NO}_2$, pulled from the OpenAQ~v3 hourly endpoint inside a city or state bounding box. The pull spans a single daytime window of up to six mid-day hours (UTC $16$:$00$ to $22$:$00$, local afternoon on the US East coast). The hourly readings at a station are averaged to one value per monitor, which matches the steady-state field the experiment reconstructs. Station longitude and latitude are projected to a local east--north frame and rescaled to the unit square.

\paragraph{Snapshots.}
Four snapshots enter the pooled analysis (Table~\ref{tab:openaq_snapshots}). They differ in day, region, and pollutant, and hence in meteorology, so each carries its own transport wind. Two further live pulls were dropped for returning fewer than the six monitors the sparsity sweep requires: a four-station Washington-metro $\mathrm{PM}_{2.5}$ pull on 2023-06-08 and a one-station Baltimore $\mathrm{PM}_{2.5}$ pull on 2024-08-10. 
A committed central-London $\mathrm{NO}_2$ snapshot (2024-07-15) with real station coordinates but synthetic concentrations ships as an offline reproducibility sample and is excluded from the pool by name.

\begin{table}[t]
  \centering
  \begin{tabular}{llcc}
    \toprule
    Region & Pollutant & Date & Monitors \\
    \midrule
    Maryland (state) & $\mathrm{PM}_{2.5}$ & 2021-07-15 & $22$ \\
    Maryland (state) & $\mathrm{PM}_{2.5}$ & 2023-07-20 & $31$ \\
    New York City    & $\mathrm{PM}_{2.5}$ & 2023-07-20 & $12$ \\
    Maryland (state) & $\mathrm{NO}_2$      & 2024-06-20 & $13$ \\
    \bottomrule
  \end{tabular}
  \caption{OpenAQ daytime snapshots in the pooled sparsity study. The monitor count is the number of live stations returned inside the bounding box after per-station aggregation.}
  \label{tab:openaq_snapshots}
\end{table}

\paragraph{Transport wind.}
The advection direction for each snapshot is derived from the Open-Meteo historical archive at the bounding-box centroid on the snapshot date, using hourly $10$~m wind speed and direction. The meteorological convention reports the direction the wind blows from, so the advection vector is $(v_E,v_N)=(-v_{\mathrm{wind}}\sin\theta,-v_{\mathrm{wind}}\cos\theta)$ for speed $v_{\mathrm{wind}}$ and bearing $\theta$. The daytime hours, $09$:$00$ to $17$:$00$ local, are combined by vector mean. Only the direction enters the operator; the magnitude is normalized away, so each snapshot contributes an advection bearing rather than a calibrated wind speed.

\paragraph{Cross-validation and pooling.}
For each snapshot, we considered sampled monitor counts
$n_{\mathrm{mon}}\in\{6,8,10,14,18,22\}$
and retained only counts no greater than the number of available stations. At each retained count, we drew \AqRealFolds{} seeded random subsets. Within each subset, $\max(2,\lfloor n_{\mathrm{mon}}/4\rfloor)$ monitors were held out for scoring and the remainder supplied the observations. Thus, $n_{\mathrm{mon}}$ includes both training and held-out monitors. The \AqRealSnaps{} snapshots yielded \AqRealCells{} snapshot--count--split evaluations.

At each monitor count, we computed ratio-of-means reductions in held-out RMSE and the sub-zero-tail diagnostic $Q_0$ defined in Section~\ref{subsec:air_quality}. Each reduction pooled the corresponding snapshot--split evaluations within the selected snapshot set. The diagnostic is calculated from posterior means and standard deviations; for the non-Gaussian constrained posterior, it is a Gaussian moment-based proxy. Bootstrap standard errors used $2{,}000$ resamples of snapshot--split rows, preserving the paired equality and constrained results within each row.

Each plotted curve uses a fixed snapshot composition. One pools all \AqRealSnaps{} snapshots at $n_{\mathrm{mon}}\in\{6,8,10\}$. The other pools the two Maryland $\mathrm{PM}_{2.5}$ snapshots over all six sampled counts. This separates changes with monitor count from changes in the snapshots contributing to a curve.

\bibliography{aaai2027}

@article{hennig2015probabilistic,
  title={Probabilistic numerics and uncertainty in computations},
  author={Hennig, Philipp and Osborne, Michael A and Girolami, Mark},
  journal={Proceedings. Mathematical, physical, and engineering sciences/the Royal Society},
  volume={471},
  number={2179},
  pages={20150142},
  year={2015}
}

@book{hennig2022probabilistic,
  title={Probabilistic Numerics: Computation as Machine Learning},
  author={Hennig, Philipp and Osborne, Michael A and Kersting, Hans P},
  year={2022},
  publisher={Cambridge University Press}
}

@article{cockayne2019bayesian,
  title={Bayesian probabilistic numerical methods},
  author={Cockayne, Jon and Oates, Chris J and Sullivan, Timothy John and Girolami, Mark},
  journal={SIAM review},
  volume={61},
  number={4},
  pages={756--789},
  year={2019},
  publisher={SIAM}
}

@book{Rasmussen2006Gaussian,
  author = {Rasmussen, Carl Edward and Williams, Christopher K. I.},
  publisher = {The MIT Press},
  title = {Gaussian Processes for Machine Learning},
  year = 2006
}

@inproceedings{graepel2003solving,
  title={Solving noisy linear operator equations by Gaussian processes: Application to ordinary and partial differential equations},
  author={Graepel, Thore},
  booktitle={ICML},
  volume={3},
  pages={234--241},
  year={2003}
}

@inproceedings{sarkka2011linear,
  title={Linear operators and stochastic partial differential equations in Gaussian process regression},
  author={S{\"a}rkk{\"a}, Simo},
  booktitle={International Conference on Artificial Neural Networks},
  pages={151--158},
  year={2011},
  organization={Springer}
}

@article{raissi2017machine,
  title={Machine learning of linear differential equations using Gaussian processes},
  author={Raissi, Maziar and Perdikaris, Paris and Karniadakis, George Em},
  journal={Journal of Computational Physics},
  volume={348},
  pages={683--693},
  year={2017},
  publisher={Elsevier}
}

@article{chen2021solving,
  title={Solving and learning nonlinear PDEs with Gaussian processes},
  author={Chen, Yifan and Hosseini, Bamdad and Owhadi, Houman and Stuart, Andrew M},
  journal={Journal of Computational Physics},
  volume={447},
  pages={110668},
  year={2021},
  publisher={Elsevier}
}

@article{pfortner2022physics,
  title={Physics-informed Gaussian process regression generalizes linear PDE solvers},
  author={Pf{\"o}rtner, Marvin and Steinwart, Ingo and Hennig, Philipp and Wenger, Jonathan},
  journal={arXiv preprint arXiv:2212.12474},
  year={2022}
}

@article{carr2005note,
  title={A note on sufficient conditions for no arbitrage},
  author={Carr, Peter and Madan, Dilip B},
  journal={Finance Research Letters},
  volume={2},
  number={3},
  pages={125--130},
  year={2005},
  publisher={Elsevier}
}

@inproceedings{cockayne2017probabilistic,
  title={Probabilistic numerical methods for PDE-constrained Bayesian inverse problems},
  author={Cockayne, Jon and Oates, Chris and Sullivan, Tim and Girolami, Mark},
  booktitle={AIP Conference Proceedings},
  volume={1853},
  number={1},
  pages={060001},
  year={2017},
  organization={AIP Publishing LLC}
}

@inproceedings{da2012gaussian,
  title={Gaussian process modeling with inequality constraints},
  author={Da Veiga, S{\'e}bastien and Marrel, Amandine},
  booktitle={Annales de la Facult{\'e} des sciences de Toulouse: Math{\'e}matiques},
  volume={21},
  number={3},
  pages={529--555},
  year={2012}
}

@article{wang2016estimating,
  title={Estimating shape constrained functions using Gaussian processes},
  author={Wang, Xiaojing and Berger, James O},
  journal={SIAM/ASA Journal on Uncertainty Quantification},
  volume={4},
  number={1},
  pages={1--25},
  year={2016},
  publisher={SIAM}
}

@misc{agrell2019gaussianprocesseslinearoperator,
      title={Gaussian processes with linear operator inequality constraints}, 
      author={Christian Agrell},
      year={2019},
      eprint={1901.03134},
      archivePrefix={arXiv},
      primaryClass={stat.ML},
      url={https://arxiv.org/abs/1901.03134}, 
}

@article{botev2017normal,
  title={The normal law under linear restrictions: simulation and estimation via minimax tilting},
  author={Botev, Zdravko I},
  journal={Journal of the Royal Statistical Society Series B: Statistical Methodology},
  volume={79},
  number={1},
  pages={125--148},
  year={2017},
  publisher={Oxford University Press}
}

@article{swiler2020survey,
  title={A survey of constrained Gaussian process regression: Approaches and implementation challenges},
  author={Swiler, Laura P and Gulian, Mamikon and Frankel, Ari L and Safta, Cosmin and Jakeman, John D},
  journal={Journal of Machine Learning for Modeling and Computing},
  volume={1},
  number={2},
  year={2020},
  publisher={Begel House Inc.}
}

@article{karniadakis2021physics,
  title={Physics-informed machine learning},
  author={Karniadakis, George Em and Kevrekidis, Ioannis G and Lu, Lu and Perdikaris, Paris and Wang, Sifan and Yang, Liu},
  journal={Nature Reviews Physics},
  volume={3},
  number={6},
  pages={422--440},
  year={2021},
  publisher={Nature Publishing Group UK London}
}

@article{jidling2017linearly,
  title={Linearly constrained Gaussian processes},
  author={Jidling, Carl and Wahlstr{\"o}m, Niklas and Wills, Adrian and Sch{\"o}n, Thomas B},
  journal={Advances in neural information processing systems},
  volume={30},
  year={2017}
}

@article{lange2018algorithmic,
  title={Algorithmic linearly constrained Gaussian processes},
  author={Lange-Hegermann, Markus},
  journal={Advances in Neural Information Processing Systems},
  volume={31},
  year={2018}
}

@inproceedings{lange2021linearly,
  title={Linearly constrained Gaussian processes with boundary conditions},
  author={Lange-Hegermann, Markus},
  booktitle={International Conference on Artificial Intelligence and Statistics},
  pages={1090--1098},
  year={2021},
  organization={PMLR}
}

@inproceedings{riihimaki2010gaussian,
  title={Gaussian processes with monotonicity information},
  author={Riihim{\"a}ki, Jaakko and Vehtari, Aki},
  booktitle={Proceedings of the thirteenth international conference on artificial intelligence and statistics},
  pages={645--652},
  year={2010},
  organization={JMLR Workshop and Conference Proceedings}
}

@article{maatouk2017gaussian,
  title={Gaussian process emulators for computer experiments with inequality constraints},
  author={Maatouk, Hassan and Bay, Xavier},
  journal={Mathematical Geosciences},
  volume={49},
  number={5},
  pages={557--582},
  year={2017},
  publisher={Springer}
}

@article{lopez2018finite,
  title={Finite-dimensional Gaussian approximation with linear inequality constraints},
  author={L{\'o}pez-Lopera, Andr{\'e}s F and Bachoc, Fran{\c{c}}ois and Durrande, Nicolas and Roustant, Olivier},
  journal={SIAM/ASA Journal on Uncertainty Quantification},
  volume={6},
  number={3},
  pages={1224--1255},
  year={2018},
  publisher={SIAM}
}

@article{bachoc2022sequential,
  title={Sequential construction and dimension reduction of Gaussian processes under inequality constraints},
  author={Bachoc, Fran{\c{c}}ois and L{\'o}pez-Lopera, Andr{\'e}s F and Roustant, Olivier},
  journal={SIAM Journal on Mathematics of Data Science},
  volume={4},
  number={2},
  pages={772--800},
  year={2022},
  publisher={SIAM}
}

@article{raissi2019physics,
  title={Physics-informed neural networks: A deep learning framework for solving forward and inverse problems involving nonlinear partial differential equations},
  author={Raissi, Maziar and Perdikaris, Paris and Karniadakis, George E},
  journal={Journal of Computational physics},
  volume={378},
  pages={686--707},
  year={2019},
  publisher={Elsevier}
}

@article{lu2021physics,
  title={Physics-informed neural networks with hard constraints for inverse design},
  author={Lu, Lu and Pestourie, Raphael and Yao, Wenjie and Wang, Zhicheng and Verdugo, Francesc and Johnson, Steven G},
  journal={SIAM Journal on Scientific Computing},
  volume={43},
  number={6},
  pages={B1105--B1132},
  year={2021},
  publisher={SIAM}
}

@article{yang2021b,
  title={B-PINNs: Bayesian physics-informed neural networks for forward and inverse PDE problems with noisy data},
  author={Yang, Liu and Meng, Xuhui and Karniadakis, George Em},
  journal={Journal of Computational Physics},
  volume={425},
  pages={109913},
  year={2021},
  publisher={Elsevier}
}

@article{dupire1994pricing,
  title={Pricing with a smile},
  author={Dupire, Bruno and others},
  journal={Risk},
  volume={7},
  number={1},
  pages={18--20},
  year={1994}
}

@article{ait2003nonparametric,
  title={Nonparametric option pricing under shape restrictions},
  author={A{\i}t-Sahalia, Yacine and Duarte, Jefferson},
  journal={Journal of Econometrics},
  volume={116},
  number={1-2},
  pages={9--47},
  year={2003},
  publisher={Elsevier}
}

@article{fengler2009arbitrage,
  title={Arbitrage-free smoothing of the implied volatility surface},
  author={Fengler, Matthias R},
  journal={Quantitative Finance},
  volume={9},
  number={4},
  pages={417--428},
  year={2009},
  publisher={Taylor \& Francis}
}

@article{cousin2016kriging,
  title={Kriging of financial term-structures},
  author={Cousin, Areski and Maatouk, Hassan and Rulli{\`e}re, Didier},
  journal={European Journal of Operational Research},
  volume={255},
  number={2},
  pages={631--648},
  year={2016},
  publisher={Elsevier}
}

@article{chataigner2021beyond,
  title={Beyond surrogate modeling: Learning the local volatility via shape constraints},
  author={Chataigner, Marc and Cousin, Areski and Cr{\'e}pey, St{\'e}phane and Dixon, Matthew and Gueye, Djibril},
  journal={SIAM Journal on Financial Mathematics},
  volume={12},
  number={3},
  pages={SC58--SC69},
  year={2021},
  publisher={SIAM}
}

@book{genz2009computation,
  title={Computation of Multivariate Normal and t Probabilities},
  author={Genz, A. and Bretz, F.},
  isbn={9783642016882},
  lccn={2009929712},
  series={Lecture Notes in Statistics},
  url={https://books.google.com/books?id=3LgGkgEACAAJ},
  year={2009},
  publisher={Springer Berlin Heidelberg}
}

@article{halton1960efficiency,
  title={On the efficiency of certain quasi-random sequences of points in evaluating multi-dimensional integrals},
  author={Halton, John H},
  journal={Numerische Mathematik},
  volume={2},
  number={1},
  pages={84--90},
  year={1960},
  publisher={Springer}
}

@article{chen2022apik,
  title={APIK: Active physics-informed kriging model with partial differential equations},
  author={Chen, Jialei and Chen, Zhehui and Zhang, Chuck and Jeff Wu, CF},
  journal={SIAM/ASA Journal on Uncertainty Quantification},
  volume={10},
  number={1},
  pages={481--506},
  year={2022},
  publisher={SIAM}
}

@article{li2024parameter,
  title={Parameter inference based on Gaussian processes informed by nonlinear partial differential equations},
  author={Li, Zhaohui and Yang, Shihao and Wu, CF Jeff},
  journal={SIAM/ASA Journal on Uncertainty Quantification},
  volume={12},
  number={3},
  pages={964--1004},
  year={2024},
  publisher={SIAM}
}

@article{golchi2015monotone,
  title={Monotone emulation of computer experiments},
  author={Golchi, Shirin and Bingham, Derek R and Chipman, Hugh and Campbell, David A},
  journal={SIAM/ASA Journal on Uncertainty Quantification},
  volume={3},
  number={1},
  pages={370--392},
  year={2015},
  publisher={SIAM}
}


\end{document}